\def\eqref#1{equation~\ref{#1}}
\def\1{\bm{1}}
\DeclareMathAlphabet{\mathsfit}{\encodingdefault}{\sfdefault}{m}{sl}
\SetMathAlphabet{\mathsfit}{bold}{\encodingdefault}{\sfdefault}{bx}{n}
\theoremstyle{plain}
\newtheorem{theorem}{Theorem}[section]
\theoremstyle{definition}
\newtheorem{assumption}[theorem]{Assumption}
\theoremstyle{remark}
\title{S7: Selective and Simplified State Space \\Layers for Sequence Modeling}
\author{
Taylan Soydan\textsuperscript{*, 1}, Nikola Zubić\textsuperscript{*, 1}, Nico Messikommer\textsuperscript{1}, Siddhartha Mishra\textsuperscript{2}, Davide Scaramuzza\textsuperscript{1} \\
\textsuperscript{*}Equal contribution \\
\textsuperscript{1}Robotics and Perception Group, University of Zurich, Switzerland \\
\textsuperscript{2}Seminar for Applied Mathematics, ETH Zurich, Switzerland \\
}
\begin{document}

\maketitle

\begin{abstract}
A central challenge in sequence modeling is efficiently handling tasks with extended contexts. While recent state-space models (SSMs) have made significant progress in this area, they often lack input-dependent filtering or require substantial increases in model complexity to handle input variability. We address this gap by introducing \textbf{S7}, a simplified yet powerful SSM that can handle input dependence while incorporating stable reparameterization and specific design choices to dynamically adjust state transitions based on input content, maintaining efficiency and performance. We prove that this reparameterization ensures stability in long-sequence modeling by keeping state transitions well-behaved over time. Additionally, it controls the gradient norm, enabling efficient training and preventing issues like exploding or vanishing gradients. S7 significantly outperforms baselines across various sequence modeling tasks, including neuromorphic event-based datasets, Long Range Arena benchmarks, and various physical and biological time series. Overall, S7 offers a more straightforward approach to sequence modeling without relying on complex, domain-specific inductive biases, achieving significant improvements across key benchmarks.
\end{abstract}

\section{Introduction}
Sequence modeling is a fundamental challenge in deep learning, with applications spanning natural language processing, computer vision, audio processing, and genomics \citep{NIPS2014_Sutskever, Graves2013SpeechRW}. The core problem lies in effectively capturing and utilizing information from long input sequences while maintaining computational efficiency. Traditional approaches, such as recurrent neural networks (RNNs) \citep{lstm_Hochreiter}, struggle with long-range dependencies due to vanishing gradients \citep{bengio_ieee_1994}, while attention-based models like Transformers \citep{NIPS2017_attention} face quadratic complexity in sequence length, limiting their scalability. While efficient, convolutional models \citep{Bai2018AnEE} cannot often capture global context. The key challenge is to design a model that can (1) efficiently process very long sequences, (2) adaptively filter and retain relevant information over extended time horizons, (3) perform content-based reasoning, and (4) maintain a compact state representation. Recent advances in Deep State Space Models (Deep SSMs) \citep{gu2020hippo, Hasani2020LiquidTN} have shown promise, but existing approaches like S4 \citep{gu2022efficiently} and Mamba \citep{mamba} still face limitations in balancing these requirements. S4 models, while efficient, lack input-dependent filtering capabilities, and Mamba, though more flexible, introduces significant complexity. There is a clear need for a model that combines the efficiency of recurrent architectures with the adaptive, content-aware processing capabilities of more complex models without sacrificing simplicity or generalizability across diverse sequence modeling tasks \citep{tay_2022, schlag21a}.

The importance of effective sequence modeling cannot be overstated in today's AI landscape. It forms the backbone of large language models \citep{brown_2020}, which have revolutionized natural language processing and are increasingly applied across diverse domains. In computer vision, sequence modeling enables video data processing and event-based vision \citep{Zubic_2024_CVPR, Zubic_2023_ICCV}, critical for applications like autonomous driving and robotics. Genomics allows for analyzing long DNA sequences, potentially unlocking breakthroughs in personalized medicine and drug discovery \citep{Avsec2021}. However, the problem is inherently challenging due to several factors. First, the sheer length of sequences in real-world applications (often millions of tokens) makes it computationally intensive to process and retain relevant information \citep{tay2021long}. Second, the relevance of information can vary dramatically across the sequence, requiring adaptive filtering mechanisms \citep{katharopoulos2020transformers}. Third, capturing long-range dependencies and performing content-based reasoning demands sophisticated architectures to maintain and update a meaningful state over time \citep{Dai2019TransformerXLAL}. Finally, there's a fundamental tension between model expressivity and computational efficiency – more powerful models often come at the cost of increased complexity and resource requirements \citep{Tay2021ScaleEI}. Balancing these competing demands while maintaining generalizability across diverse tasks remains an open challenge in the field \citep{schlag21a}, driving the need for innovative approaches to push the boundaries of what's possible in sequence modeling.

Recent advancements in sequence modeling have made significant strides. Transformer architectures \citep{NIPS2017_attention} revolutionized the field with their attention mechanisms, enabling parallel processing and capturing long-range dependencies. However, their quadratic complexity in sequence length remains a limitation. Efficient transformer variants \citep{Kitaev2020Reformer, Beltagy2020Longformer} attempted to address this, but often at the cost of reduced model capacity. The emergence of Deep State Space Models (SSMs) marked a new frontier, with S4 \citep{gu2022efficiently} demonstrating impressive performance on long-range tasks while maintaining linear complexity. Mamba \citep{mamba} further improved upon this by introducing input-dependent dynamics, enhancing the model's ability to perform content-based filtering. Despite these advances, the field has yet to achieve an optimal balance between efficiency, adaptability, and simplicity. The primary stumbling block lies in reconciling the need for input-dependent processing—crucial for adaptive filtering and content-based reasoning—with the computational efficiency of recurrent architectures. S4 models, while efficient, lack input-dependent dynamics, limiting their ability to adapt to varying content.
Conversely, Mamba introduces input dependence at the cost of increased complexity and reliance on specialized hardware implementations. The challenge now is to develop a model that combines the strengths of these approaches—the efficiency and simplicity of recurrent models with the adaptive capabilities of input-dependent systems—without compromising on performance or generalizability across diverse tasks \citep{schlag21a, tay_2022, Zubic2024LimitsOD}. This balance is critical for pushing sequence modeling towards more general and scalable AI systems capable of handling the complexities of real-world data across various domains.

Our paper introduces S7, a simplified yet powerful State Space Model (SSM) that advances the frontier of sequence modeling by making the purely recurrent, time-domain S5 model input-dependent. This critical insight combines the efficiency of recurrent architectures with the adaptive processing capabilities of more complex models. By dynamically adjusting state transitions based on input content, S7 performs content-based reasoning and adaptive filtering while preserving recurrent models' simplicity and computational efficiency. Unlike S4, which lacks input-dependent dynamics, or S6 (Mamba), which introduces hardware-specific complexity, S7 achieves a balanced design. We introduce stable reparameterization and additional design choices that ensure long-term stability and performance across diverse tasks.

Our extensive experiments validate S7's versatility and effectiveness across a wide range of sequence modeling tasks, setting new standards in the field. On event-based vision datasets, S7 achieves state-of-the-art results, attaining accuracies of 99.2\% on DVS-Gesture, 96.3\% on Spiking Heidelberg Digits, and 88.2\% on Spiking Speech Commands, significantly outperforming traditional dense methods. In human activity recognition, S7 achieves an impressive accuracy of 94.1\%, demonstrating its capability to handle irregularly sampled, noisy time-series data. For genomics classification, S7 sets a new benchmark with 97.5\% accuracy on the EigenWorms dataset, effectively capturing very long-term dependencies in sequences of length 17,984. On the Long Range Arena benchmarks \citep{tay2021long}, S7 excels in multiple tasks, achieving 63.77\% accuracy on ListOps and 91.80\% on Retrieval, outperforming prior state-of-the-art models and 87.22\% accuracy on the Text classification task, showcasing its ability to process and understand long textual sequences. Moreover, S7 demonstrates remarkable efficiency and precision in simulating physical dynamical systems, reducing the test \( L^2 \) error by nearly half compared to previous models in predicting the FitzHugh-Nagumo system, and achieves the lowest Mean Squared Error (MSE) of 0.114 in the Walker2d Kinematic Simulation task. These results show S7's ability to generalize across diverse domains, offering a more efficient and adaptable approach to sequence modeling without relying on domain-specific inductive biases, and highlight S7's improvements in capturing long-range dependencies and complex temporal patterns while maintaining computational efficiency, marking a significant improvement over previous models and opening new avenues for research and application in the field of sequence modeling.

\section{Related work}
Sequence modeling has evolved from traditional RNNs \citep{elman_1990}, including LSTMs \citep{lstm_Hochreiter} and GRUs \citep{cho2014gru}, which struggle with long-range dependencies \citep{bengio_ieee_1994}, to CNNs adapted for sequential data \citep{Bai2018AnEE, Oord2016WaveNetAG}, and then to attention-based models like Transformers \citep{NIPS2017_attention}. While Transformers excel at capturing long-range dependencies, their quadratic complexity led to the development of efficient variants using linear \citep{katharopoulos2020transformers, Wang2020LinformerSW} or sparse attention \citep{Child2019GeneratingLS, Beltagy2020Longformer}. SSMs emerged as a promising approach, with S4 \citep{gu2022efficiently} achieving state-of-the-art performance on long-range tasks while maintaining linear complexity. Subsequent work refined SSMs, leading to S4D \citep{gu2022s4d} and S5 \citep{smith2023simplified}. The limitation of fixed dynamics in traditional SSMs motivated input-dependent models, notably the Mamba architecture \citep{mamba} with its selective state spaces and its recent extension, Mamba-2 \citep{mamba2}, which further improves performance and efficiency. These advancements have impacted various domains, including event-based vision processing \citep{Zubic_2024_CVPR} and have been evaluated on long-range sequence modeling benchmarks \citep{tay2021long}. Theoretical work has explored connections to control theory \citep{gu2021combining}, approximation capabilities \citep{gu2020hippo}, and complexity analysis \citep{dao2022flashattention}. Our work, S7, builds upon these foundations, particularly SSMs and input-dependent models, aiming to combine the efficiency of recurrent architectures with adaptive capabilities to address limitations in existing approaches. Specifically, S7 applies to S5 the same principle of input-dependence that Mamba introduced to S4, but within the context of a purely recurrent, time-domain model.

\section{Method}
\subsection{Background}
\paragraph{State Space Models (SSMs)}
SSMs are a class of models widely used in control theory, neuroscience, and machine learning for modeling sequential data. The core of SSMs lies in their representation of a system's evolution over time through a latent state. Mathematically, SSMs are typically represented as:
\begin{equation}
    \dot{x}(t) = A x(t) + B u(t) \quad \quad y(t) = C x(t) + D u(t)
\end{equation}
where \(x(t) \in \mathbb{R}^H\) is the latent state vector, \(u(t) \in \mathbb{R}^N\) is the input signal, and \(y(t) \in \mathbb{R}^N\) is the output. The system is governed by the matrices \(A \in \mathbb{R}^{H \times H}\), \(B \in \mathbb{R}^{H \times N}\), \(C \in \mathbb{R}^{N \times H}\), and \(D \in \mathbb{R}^{N \times N}\), which are the parameters to be learned. SSMs capture long-range dependencies in sequential data by evolving the latent state over time in a continuous manner \citep{gu2020hippo, smith2023simplified}. In deep learning, SSMs can be stacked in multiple layers, allowing them to process complex sequential data more effectively. By stacking SSM layers, these models can capture intricate temporal patterns while maintaining a compact state representation, efficiently handling long sequences \citep{gu2022efficiently}.

\paragraph{Discretization of Continuous SSMs}
In practice, continuous SSMs must be discretized to apply them in computational models, particularly for deep learning tasks. The discretization process converts continuous-time dynamics into a form that can be computed at discrete time steps, typically using methods such as the zero-order hold (ZOH). The discrete equivalent of the continuous system is given by:
\begin{equation}
    x_k = \bar{\Lambda} x_{k-1} + \bar{B} u_k \quad \quad y_k = \bar{C} x_k + \bar{D} u_k
\end{equation}
where \(\bar{\Lambda} = e^{A \Delta t}\) and \(\Delta t\) is the time step size \citep{smith2023simplified}. This formulation allows the model to process input sequences at discrete intervals, making it suitable for training on modern hardware. Efficient discretization techniques are essential to ensure that SSMs retain their ability to model long-range dependencies without becoming computationally expensive \citep{gu2022efficiently}.

\subsection{Input Dependency in State-Space Models}
To improve the performance of SSMs, input dependence can be introduced by making the transition matrices a function of the input. In S7, the system evolution at time step \( k \) can be described by the following discretized equations:
\begin{equation}
    x_k = \bar{\Lambda}_k x_{k-1} + \bar{B}_k u_k \quad \quad y_k = \bar{C}_k x_k + \bar{D}_k u_k
\end{equation}
Here, the transition matrix \( \bar{\Lambda}_k \), along with the input matrices \( \bar{B}_k \), \( \bar{C}_k \), and \( \bar{D}_k \), are functions of the input \( u_k \), allowing the model to adapt to the current input at each time step dynamically. This enables the model to filter information, selectively determining what to retain and forget. Doing so enhances the model’s ability to capture essential long-term dependencies while filtering out irrelevant information, improving performance and generalization. The system output \( y_k \) is processed through normalization layers, followed by a GeLU activation and a gating mechanism. The gating function, represented by a sigmoid activation \( \sigma(W \cdot \text{GeLU}(y_k)) \), helps regulate how much of the processed information passes through, enabling the model to control the flow of information based on the input signal and current state.

This dynamic gating allows the model to adjust the information flow based on the input signal and the current state, providing a more robust and flexible state evolution. Introducing input-dependent dynamics improves S7’s ability to handle diverse temporal dependencies, effectively filtering and retaining relevant information over time.
By making the state transition matrices depend on the input \( u_k \), S7 improves on the limitations of static state transitions found in previous models, such as S4 and S5 \citep{gu2022efficiently, smith2023simplified}, which lacked the flexibility to adapt state transitions based on the input. This selective updating of internal states allows S7 to balance long-term and short-term dependencies, leading to better performance and more effective memory management in sequence modeling tasks.

\subsection{The S7 Layer}
Building on the foundation of input dependency and recurrent SSMs, we introduce the \textbf{S7} model, which extends the capabilities of the S5 model by incorporating input-dependent state transitions and improving training stability via reparameterization techniques. This allows S7 to dynamically adjust its state transitions based on input content while maintaining the efficiency of recurrent models for long-sequence tasks.

\begin{figure}[ht!]
\centering
\includegraphics[width=0.80\textwidth]{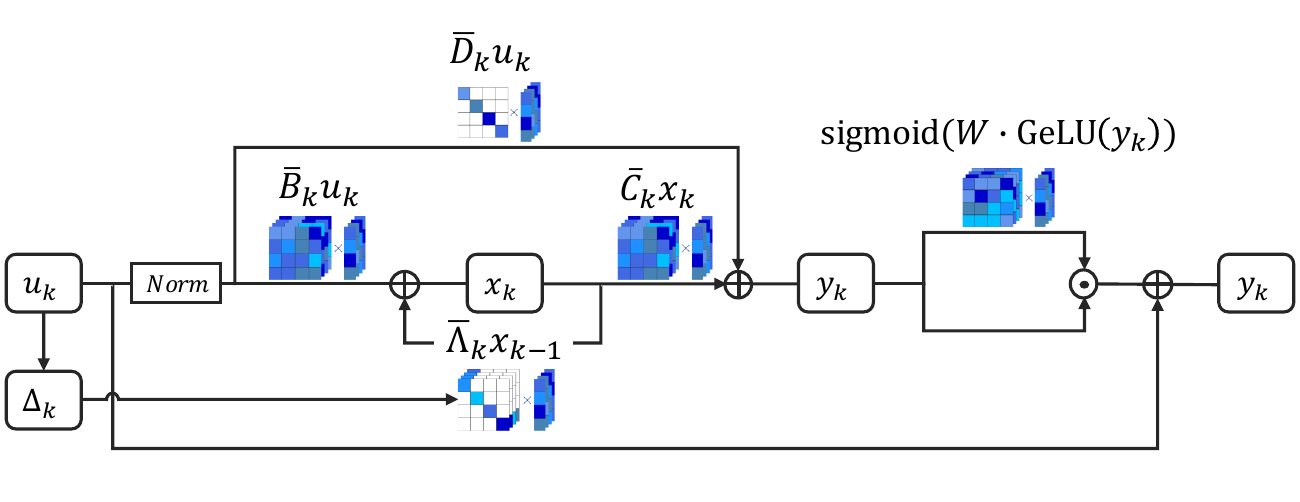}
\caption{
\textbf{The S7 Layer Architecture.} The diagram illustrates the recurrent structure of the S7 model, which integrates input-dependent state-space models with stable parameterization. The transition matrices \( B_k \), \( C_k \), \( D_k \), and \( \bar{\Lambda}_k \) reflect the interaction between input \( u_k \) and previous hidden state \( x_{k-1} \), while non-linearity is reinforced by the sigmoid. Contrary to input-dependent S6 (Mamba) \cite{mamba}, this model is much simpler and based on S5 \citep{smith2023simplified}.
}
\label{fig:method_overview}
\end{figure}

\paragraph{Stable Reparameterization for Long-Term Dependencies}
To ensure stability during long-sequence modeling, with \textbf{S7}, we employ a reparameterization of the transition matrix \(\bar{\Lambda}_k\), inspired by StableSSM \citep{wang2024stablessmalleviatingcursememory}. The recurrent matrix is modified by a stability function, ensuring that the system avoids unstable behavior over time. Specifically, the reparameterization is applied as:
\begin{equation}
\bar{\Lambda}_k = f(\Lambda_k) = I - \left( \Lambda_k^2 + 0.5I \right)^{-1}
\end{equation}
where \( I \) is the identity matrix.
This stability function guarantees that the eigenvalues of the matrix remain within a range that promotes stable dynamics, even in the presence of long-range dependencies. Theoretical and experimental details on reparametrizations are in the \ref{appendix:right_reparametrization} \& \ref{appendix:ablation_study}. As already said, we assume the system follows input-dependent dynamics, where the hidden states evolve according to the differential equation:
\begin{equation}
\label{equation:input_dependent_ssm}
x_k = \Lambda_k(u_k; \theta_m) x_{k-1} + B_k(\theta_m) u_k + b_k(\theta_m) \quad
y_k = C_k(\theta_m) x_k + D_k(\theta_m) u_k
\end{equation}
where \(x_k \in \mathbb{R}^m\) is the hidden state at time step \(k\), and \(u_k \in \mathbb{R}^d\) is the input at time step \(k\). The matrices \(\Lambda_k(\theta_m) \in \mathbb{R}^{m \times m}\), \(B_k(\theta_m) \in \mathbb{R}^{m \times d}\), \(b_k(\theta_m) \in \mathbb{R}^m\), \(C_k(\theta_m) \in \mathbb{R}^{d \times m}\), and \(D_k(\theta_m) \in \mathbb{R}^{d \times d}\) are parameterized by \(\theta_m\), the model’s trainable parameters.

The parameterization of the system \( \theta_m \) in terms of our notation can be described as \( \theta_m = (\Lambda_k, B_k, b_k, C_k, D_k) \), where \( \theta_m \in \Theta_m := \{ \mathbb{R}^{m \times m} \times \mathbb{R}^{m \times d} \times \mathbb{R}^m \times \mathbb{R}^{d \times m} \times \mathbb{R}^{d \times d} \} \). This defines \( \theta_m \) as the set of all trainable parameters in the SSM.

\begin{assumption}
\label{assumption:1}
The mappings \( \theta_m \mapsto \Lambda_k(\theta_m) \), \( \theta_m \mapsto B_k(\theta_m) \), \( \theta_m \mapsto b_k(\theta_m) \), and \( \theta_m \mapsto C_k(\theta_m) \) are Lipschitz continuous for all \( u_k \) in a bounded input space \( \mathcal{X} \subset \mathbb{R}^d \). This ensures that small parameter changes lead to small changes in the state transition matrices, promoting stable learning and smooth transitions over time.
\end{assumption}

\begin{assumption}
\label{assumption:2}
For all \( u_k \in \mathcal{X} \), the eigenvalues of \( \Lambda_k(\theta_m) \) have negative real parts, ensuring that the system remains uniformly asymptotically stable.
\end{assumption}

\begin{assumption}
\label{assumption:3}
The parameters \( \theta_m \) are subject to a stable reparameterization \( f \), such that \( \theta_m = f(w_m) \), meaning raw model \( w_m \) parameters after reparametrization are \( \theta_m \), and \( f \) satisfies the stable reparameterization condition defined by:
\begin{equation}
\label{eq:stable_reparam_condition}
\sup_{w} \left[ \| f(w) \| \sup_{\| \tilde{w} - w \| \leq \beta} \int_{0}^{\infty} \left\| \Phi_{\tilde{w}}(k, s) - \Phi_{w}(k, s) \right\| \, dk \right] \leq g(\beta)
\end{equation}
for some continuous function \( g: [0, \infty) \to [0, \infty] \) with \( g(0) = 0 \). 
Here, \( \Phi_w(k, s) \) denotes the state transition matrix corresponding to parameters \( w \), which satisfies:
\begin{equation}
\label{eq:state_transition_matrix}
\frac{d}{dk} \Phi_w(k, s) = \Lambda_k(u_k; f(w)) \Phi_w(k, s), \quad \Phi_w(s, s) = I_m.
\end{equation}
The state transition matrix \( \Phi_w(k, s) \) describes the evolution of the system's state from time \( s \) to time \( k \) under the dynamics defined by \( \Lambda_k(u_k; f(w)) \) and \( I_m \) is the identity matrix. The constant parameter \( \beta \) limits how much the parameters can be perturbed while ensuring that the state transition matrices and system behavior remain stable. The function \( g(\beta) \) helps quantify how much the difference between the perturbed and unperturbed system can grow. As \( \beta \to 0 \), this difference should vanish.
\end{assumption}

\begin{assumption}
\label{assumption:4}
The system’s inputs \( u_k \) and hidden states \( x_k \) are uniformly bounded, and the matrices \( \Lambda_k(\theta_m) \), \( B_k(\theta_m) \), \( b_k(\theta_m) \), and \( C_k(\theta_m) \) are uniformly bounded in \( m \).
\end{assumption}

\begin{theorem}[Existence of Stable Approximation by Stable Reparameterization with Input-Dependent Dynamics]
\label{theorem:1}
Let \(\mathbf{H}\) be any bounded, causal, continuous, and regular linear functional. Suppose \(\mathbf{H}\) is approximated by a sequence of state-space models \(\{ \widehat{\mathbf{H}}(\cdot; \theta_m) \}_{m=1}^\infty\) with input-dependent dynamics of the form Eq.~\ref{equation:input_dependent_ssm}.
Then, the approximation of \( \mathbf{H} \) by the sequence \( \{ \widehat{\mathbf{H}}(\cdot; \theta_m) \}_{m=1}^\infty \) is a stable approximation in the Sobolev-type norm defined by:
\begin{equation}
\left\| \mathbf{H} - \widehat{\mathbf{H}} \right\|_{W^{1, \infty}} = \sup_k \left( \| H_k - \widehat{H}_k \|_\infty + \left\| \frac{d H_k}{d k} - \frac{d \widehat{H}_k}{d k} \right\|_\infty \right).
\end{equation}
\end{theorem}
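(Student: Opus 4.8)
The plan is to reduce the statement to a single perturbation estimate on the state-space output map. Writing $\theta_m = f(w_m)$ and letting $\tilde{w}_m$ be any perturbation with $\|\tilde{w}_m - w_m\| \le \beta$, the triangle inequality gives
\begin{equation}
\left\| \mathbf{H} - \widehat{\mathbf{H}}(\cdot; f(\tilde{w}_m)) \right\|_{W^{1,\infty}} \le \left\| \mathbf{H} - \widehat{\mathbf{H}}(\cdot; f(w_m)) \right\|_{W^{1,\infty}} + \left\| \widehat{\mathbf{H}}(\cdot; f(\tilde{w}_m)) - \widehat{\mathbf{H}}(\cdot; f(w_m)) \right\|_{W^{1,\infty}},
\end{equation}
so it suffices to dominate the last term by a continuous $g$-type function of $\beta$ that is independent of $m$ and of $w_m$; together with the hypothesis that $\{\widehat{\mathbf{H}}(\cdot;\theta_m)\}$ approximates $\mathbf{H}$, that is exactly the definition of a stable approximation in the $W^{1,\infty}$ norm. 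First I would express the solution of Eq.~\ref{equation:input_dependent_ssm} by variation of constants, $\widehat H_k = C_k(\theta_m)\big(\Phi_w(k,0)x_0 + \int_0^k \Phi_w(k,s)(B_s(\theta_m)u_s + b_s(\theta_m))\,ds\big) + D_k(\theta_m)u_k$ with $\Phi_w$ from Eq.~\ref{eq:state_transition_matrix}, and write the analogous expression for $\tilde{w}_m$. Assumption~\ref{assumption:2} gives a uniform bound $\|\Phi_w(k,s)\| \le M e^{-\omega(k-s)}$ with $\omega>0$, so all the $s$-integrals converge uniformly in $k$ and the $\sup_k$ is finite; this decay will be used repeatedly.

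For the zeroth-order part $\sup_k \| H_k - \widehat H_k \|_\infty$, I would split $\widehat H_k(\tilde{w}_m) - \widehat H_k(w_m)$ into three groups: (i) terms in which only $C_k$, $B_s$, $b_s$, $D_k$ change, bounded directly by Assumption~\ref{assumption:1} times $\|f(\tilde{w}_m) - f(w_m)\|$, which is itself at most a modulus of continuity $\omega_f(\beta)$ of the reparameterization $f$; (ii) the term in which only $\Phi$ changes, namely $C_k \int_0^k (\Phi_{\tilde{w}}(k,s) - \Phi_w(k,s))(B_s u_s + b_s)\,ds$ — this is precisely the quantity controlled by the stable reparameterization condition Eq.~\ref{eq:stable_reparam_condition}: multiplying and dividing by $\|f(w_m)\|$ and invoking the uniform bounds on $C_k$, $B_s$, $b_s$, $u_s$ from Assumption~\ref{assumption:4} converts $\sup_k\|\cdots\|_\infty$ into a constant multiple of $g(\beta)$; and (iii) cross terms, which are products of a bounded factor with one of the two quantities just estimated. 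Collecting these yields $\sup_k\|H_k - \widehat H_k\|_\infty \le \| \mathbf{H} - \widehat{\mathbf{H}}(\cdot;f(w_m))\|_\infty + C(\omega_f(\beta) + g(\beta))$ with $C$ depending only on the uniform bounds and on $\omega$.

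The derivative component of the $W^{1,\infty}$ norm is the main obstacle. Differentiating the variation-of-constants formula in $k$ and using Eq.~\ref{eq:state_transition_matrix} gives $\frac{d}{dk}\widehat H_k = \dot C_k(\cdots) + C_k\big(\Lambda_k \Phi_w(k,0)x_0 + \int_0^k \Lambda_k \Phi_w(k,s)(B_s u_s + b_s)\,ds + B_k u_k + b_k\big) + \frac{d}{dk}(D_k u_k)$. The dangerous pieces are $C_k \Lambda_k \int_0^k (\Phi_{\tilde{w}} - \Phi_w)(B_s u_s + b_s)\,ds$ and $C_k(\tilde\Lambda_k - \Lambda_k)\int_0^k \Phi_{\tilde{w}}(B_s u_s + b_s)\,ds$: the first is again absorbed by Eq.~\ref{eq:stable_reparam_condition} after using that $\Lambda_k$ is uniformly bounded (Assumption~\ref{assumption:4}), while the second uses the Lipschitz bound $\theta_m \mapsto \Lambda_k(\theta_m)$ (Assumption~\ref{assumption:1}) together with the exponential decay of $\Phi_{\tilde{w}}$ from Assumption~\ref{assumption:2} to keep the integral finite. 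The subtle point is that differentiation must not amplify the perturbation by more than a fixed constant, and this holds precisely because the stability margin $\omega$ is uniform in $k$ and $m$; a naive bound that ignores this margin would diverge, so Assumptions~\ref{assumption:2} and~\ref{assumption:4} are genuinely essential here. I also expect a minor technical nuisance in matching the integration variable in Eq.~\ref{eq:stable_reparam_condition} (integration in $k$) with the convolution-type integral in $s$ appearing in the output; in the time-homogeneous reduction these coincide after a change of variables, and the input-dependent case is handled by the same change together with the decay estimate.

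Finally I would assemble the pieces: both the zeroth-order and the first-order estimates contribute terms of the form $C\,\omega_f(\beta)$ or $C\,g(\beta)$ with $C$ depending only on the uniform bounds and on $\omega$ — never on $m$ — so setting $\tilde g(\beta) := C(\omega_f(\beta) + g(\beta))$ produces a continuous function with $\tilde g(0)=0$ that dominates $\| \widehat{\mathbf{H}}(\cdot;f(\tilde{w}_m)) - \widehat{\mathbf{H}}(\cdot;f(w_m)) \|_{W^{1,\infty}}$ uniformly in $m$ and $w_m$. Feeding this back into the displayed triangle inequality, and using that the sequence $\{\widehat{\mathbf{H}}(\cdot;\theta_m)\}$ converges to $\mathbf{H}$ in $W^{1,\infty}$, establishes that the approximation is stable in the Sobolev-type norm, which completes the proof.
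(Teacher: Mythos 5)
Your proposal is correct and shares the paper's overall skeleton---reduce the theorem to a perturbation bound on the output map, represent the state by variation of constants, and combine Lipschitz continuity of the parameter maps with uniform asymptotic stability---but the two arguments pivot on different key estimates. The paper's proof in \ref{appendix:theorem:1} does not actually invoke Assumption~\ref{assumption:3} in its quantitative step: it writes the difference equation for $\delta x_k = \tilde{x}_k - x_k$, applies Gr\"onwall's inequality, and concludes $\|\delta x_k\| \leq C\beta$ purely from the Lipschitz bounds $\|\Lambda_k(\cdot;\tilde{\theta}) - \Lambda_k(\cdot;\theta)\| \leq L_\Lambda \beta$ (and similarly for $B$ and $b$), arriving at $E(\beta) \leq E(0) + K\beta$. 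You instead route the $\Phi$-difference term through the stable reparameterization condition \eqref{eq:stable_reparam_condition}, obtaining a bound of the form $C(\omega_f(\beta) + g(\beta))$; this is arguably more faithful to the role Assumption~\ref{assumption:3} is meant to play, and it sidesteps the paper's delicate step in which the linear-in-$(k-s)$ growth of $\|\tilde{\Phi}(k,s) - \Phi(k,s)\|$ is tamed by asserting that ``$k-s$ remains finite,'' whereas you use the uniform exponential decay from Assumption~\ref{assumption:2} where it is actually needed. You also explicitly treat the derivative component of the $W^{1,\infty}$ norm, which the paper's written proof omits entirely; that is a genuine strengthening. Two caveats on your side: extracting $\int_0^\infty \|\Phi_{\tilde{w}}(k,s) - \Phi_w(k,s)\|\,dk \leq g(\beta)/\|f(w)\|$ from \eqref{eq:stable_reparam_condition} requires $\|f(w)\|$ to be bounded away from zero, which is not explicitly assumed, and the mismatch between the integration variable $k$ in \eqref{eq:stable_reparam_condition} and the convolution variable $s$ in the output representation genuinely requires the change-of-variables and decay argument you flag rather than being purely notational. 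Neither issue is worse than what the paper itself leaves implicit.
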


\begin{proof}
Here, we provide a brief sketch of the proof, with full details in the \ref{appendix:theorem:1}.

The mappings from parameters \( \theta_m \) to the system matrices \( \Lambda_k(u_k; \theta_m) \), \( B(\theta_m) \), \( b(\theta_m) \), and \( c(\theta_m) \) (with \( c \in \mathbb{R}^m \) being small, assuming a single-output dimension for simplicity) are Lipschitz continuous, ensuring that small perturbations in \( \theta_m \) lead to small changes in the system dynamics. The eigenvalues of \( \Lambda_k(u_k; \theta_m) \) have negative real parts for all \( u_k \in \mathcal{X} \), which guarantees uniform asymptotic stability of the system. As a result, the state transition matrix \( \Phi(k, s; u, \theta_m) \) decays exponentially as \( k - s \) increases, preserving stability over time. The stable reparameterization function \( f \) further ensures that parameter perturbations are well-controlled, and the condition involving \( g(\beta) \) implies that as \( \beta \to 0 \), the difference between the perturbed and unperturbed state transition matrices vanishes.

To analyze the approximation error, we bound the total error \( E(\beta) \) by combining the error due to model capacity (which vanishes as \( m \to \infty \)) and the error from parameter perturbations. Applying Grönwall's inequality and using the Lipschitz properties of the mappings, we show that the error from perturbations is proportional to \( \beta \). As \( m \to \infty \) and \( \beta \to 0 \), the total approximation error tends to zero, ensuring that the sequence \( \{ \widehat{\mathbf{H}}(\cdot; \theta_m) \}_{m=1}^\infty \) provides a stable approximation of \( \mathbf{H} \).
\end{proof}

\begin{theorem}[Parameterizations Influence the Gradient Norm Scale in Input-Dependent SSMs]
\label{theorem:2}
The gradient of the loss with respect to the trainable parameter \( w_j \) satisfies the following bound:
\begin{equation}
\label{eq:gradient_bound_input_dependent}
\left| \frac{\partial \text{Loss} }{\partial w_j } \right| \leq C_{\mathbf{H}, \widehat{\mathbf{H}}_m} \left| f'(w_j) \right|,
\end{equation}
where \( f'(w_j) \) is the derivative of the reparameterization function \( f \) with respect to \( w_j \) and \( C_{\mathbf{H}, \widehat{\mathbf{H}}_m} \) is a constant independent of \( w_j \), but dependent on the target functional \( \mathbf{H} \) and the model \( \widehat{\mathbf{H}}_m \).
\end{theorem}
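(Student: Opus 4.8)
The plan is to isolate the entire dependence on $w_j$ through the chain rule, reducing the claim to a uniform bound on the sensitivity of the loss with respect to the reparameterized parameter $\theta_j = f(w_j)$. Since the reparameterization acts componentwise, we have $\frac{\partial \text{Loss}}{\partial w_j} = \frac{\partial \text{Loss}}{\partial \theta_j}\, f'(w_j)$, so the whole theorem reduces to showing $\bigl|\frac{\partial \text{Loss}}{\partial \theta_j}\bigr| \le C_{\mathbf{H},\widehat{\mathbf{H}}_m}$ with a constant independent of $w_j$. Writing the loss as a Sobolev-type discrepancy between $\mathbf{H}$ and $\widehat{\mathbf{H}}_m$ — consistent with the norm appearing in Theorem~\ref{theorem:1} — the first step is to expand $\frac{\partial \text{Loss}}{\partial \theta_j}$ into an inner product of the residual $\widehat{\mathbf{H}}_m - \mathbf{H}$ (and its time derivative) with $\frac{\partial \widehat H_k}{\partial \theta_j}$ (and $\frac{\partial}{\partial \theta_j}\frac{d\widehat H_k}{dk}$). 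By Cauchy–Schwarz this is controlled by $\|\mathbf{H}-\widehat{\mathbf{H}}_m\|_{W^{1,\infty}}$ times a supremum over $k$ of the parameter sensitivities of the model output.

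The core of the argument is then a uniform-in-time bound on $\bigl\|\frac{\partial \widehat H_k}{\partial \theta_j}\bigr\|$. Recall that $\widehat H_k$ is assembled from the system matrices $C_k$, $D_k$, $B_k$, $b_k$ and the state transition matrix $\Phi_w(k,s)$ applied to the (uniformly bounded) input history. When $\frac{\partial}{\partial \theta_j}$ acts directly on $C_k$, $D_k$, $B_k$, or $b_k$, Assumption~\ref{assumption:1} (Lipschitz continuity) together with Assumption~\ref{assumption:4} (uniform boundedness) gives a constant bound immediately. The remaining — and principal — term is the one where $\frac{\partial}{\partial \theta_j}$ passes through $\Phi_w(k,s)$. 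Differentiating the defining ODE \eqref{eq:state_transition_matrix} in $\theta_j$ and solving the resulting inhomogeneous linear equation with zero initial data yields the Duhamel representation
\begin{equation}
\frac{\partial \Phi_w(k,s)}{\partial \theta_j} = \int_s^k \Phi_w(k,r)\, \frac{\partial \Lambda_r(u_r;\theta_m)}{\partial \theta_j}\, \Phi_w(r,s)\, dr .
\end{equation}
Assumption~\ref{assumption:2} makes the system uniformly asymptotically stable, so $\|\Phi_w(k,s)\| \le M e^{-\alpha(k-s)}$ for some $M,\alpha>0$; combined with the uniform bound on $\bigl\|\frac{\partial \Lambda_r}{\partial \theta_j}\bigr\|$ coming from Assumptions~\ref{assumption:1} and \ref{assumption:4}, the integral is bounded by $M^2 L\,(k-s)\,e^{-\alpha(k-s)}$, whose supremum over all $k\ge s$ is finite. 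Equivalently, one can apply Grönwall's inequality to the variational equation directly, exactly as in the sketch of Theorem~\ref{theorem:1}.

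Assembling the pieces, $\sup_k \bigl\|\frac{\partial \widehat H_k}{\partial \theta_j}\bigr\|$ and the analogous quantity for the time derivative are bounded by a constant determined solely by the Lipschitz constants of the parameter maps, the decay parameters $(M,\alpha)$, and the uniform bounds of Assumption~\ref{assumption:4} — crucially independent of $w_j$. Multiplying by $\|\mathbf{H}-\widehat{\mathbf{H}}_m\|_{W^{1,\infty}}$ defines the constant $C_{\mathbf{H},\widehat{\mathbf{H}}_m}$, and the residual chain-rule factor $f'(w_j)$ then reproduces \eqref{eq:gradient_bound_input_dependent}. I expect the main obstacle to be the uniform-in-time control of the parameter sensitivity of $\Phi_w$: the Duhamel integral accumulates contributions over the entire past, so without the exponential decay supplied by Assumption~\ref{assumption:2} the bound would grow with $k$ and the constant could not be made $w_j$-independent. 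Some additional care is needed for the Sobolev term, where one differentiates $\frac{d\widehat H_k}{dk} = \Lambda_k \widehat H_k + (\dots)$ with respect to $\theta_j$ and re-uses the same $\Phi_w$-sensitivity estimate together with the Lipschitz bound on $\frac{\partial \Lambda_k}{\partial \theta_j}$, rather than carrying out a separate argument.
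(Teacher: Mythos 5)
Your proposal is correct and follows essentially the same route as the paper's proof: extract the factor $f'(w_j)$ by the chain rule through the Lipschitz parameter maps, bound the parameter sensitivity of the state via the variational equation solved by variation of parameters (you differentiate $\Phi_w(k,s)$ and use Duhamel; the paper equivalently differentiates $x_k$ directly), and use the exponential decay $\|\Phi(k,s)\|\leq M e^{-\alpha(k-s)}$ from Assumption~\ref{assumption:2} to make the resulting integral uniformly bounded in $k$. The only cosmetic difference is that the paper takes the loss to be $\sup_k\|H_k-\hat y_k\|_\infty$ and bounds the gradient directly by $\sup_k|\partial\hat y_k/\partial w_j|$ without the residual factor $\|\mathbf{H}-\widehat{\mathbf{H}}_m\|_{W^{1,\infty}}$ you introduce, which changes only the form of the constant $C_{\mathbf{H},\widehat{\mathbf{H}}_m}$.
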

\begin{proof}[Proof]
We provide a brief proof sketch; detailed steps are in the \ref{appendix:theorem:2}.

The goal is to bound the gradient of the loss function, which measures the difference between the target functional and the model's output. Since the target does not depend on the model parameters \( w_j \), the model output determines the gradient entirely. This output depends on the parameterized functions \( c(\theta_m) \), which are Lipschitz continuous, and the hidden state dynamics, which are stable under the given assumptions.

Using the Lipschitz continuity of \( c(\theta_m) \) and the uniform stability of the system, we show that the gradient of the model output with respect to \( w_j \) is bounded by a constant times the derivative of the reparameterization function \( f \). This leads to the conclusion that the gradient of the loss function scales proportionally to \( | f'(w_j) | \), explaining the role of the reparameterization in controlling optimization behavior.
\end{proof}

\subsection{Additional Design Choices for Event-Based Neuromorphic Tasks}
\paragraph{Efficient Tokenization for Event-Based Data}
In S7, we introduce an event-based tokenization scheme that captures the neuromorphic data's spatial and temporal nature. This method utilizes a sensor of size \( (s_x, s_y) \), where \( s_x \) is the number of horizontal pixels and \( s_y \) is the number of vertical pixels. Each event, \( \varepsilon \), is defined by the following quadruple: \( (x, y, t, p) \), where \(x\) and \(y\) represent the spatial coordinates of the event on the sensor, \(t\) represents the timestamp of the event, and  \(p \in \{-1, 1\}\) is the polarity, indicating the nature of the event (positive or negative change).
We then define a unique token for each event \( \varepsilon \) using the following formula:
\begin{equation}
\mathcal{T}_{\text{S7}}(\varepsilon) = 2 \cdot (x \cdot s_x + y) + p
\end{equation}
In this formula, \( \mathcal{T}_{\text{S7}}(\varepsilon) \) denotes the token generated for the event \( \varepsilon \) using the S7-specific tokenization scheme, as indicated by the subscript. This bijective mapping ensures each event produces a unique token, preventing collisions where different events could share the same token, as seen in models like EventSSM \citep{schone2024scalable}. By encoding spatial and polarity information, the S7 scheme enhances the model’s ability to efficiently process asynchronous, real-time data.

\paragraph{Efficiency Through Event Pooling and Asynchronous Discretization}
Also, we optimize computational efficiency through \textit{Event Pooling}, which pools hidden states over a window of size \( p \), reducing computational load:
\begin{equation}
x_{k+p} = \Lambda_k^p x_k + \sum_{i=1}^{p} \Lambda_k^{p-i} B u_{k+i}
\end{equation}
Further, \textit{Asynchronous Discretization} updates the hidden state based on varying time intervals between events, enabling S7 to handle real-time event streams efficiently:
\begin{equation}
x_k = e^{\Lambda_k \Delta t_k} x_{k-1} + B u_k
\end{equation}
This ensures that S7 remains efficient in processing asynchronous data, such as in neuromorphic vision and spiking neural networks. By integrating input dependence, stable reparameterization, and efficient tokenization, S7 enables significant performance improvement, surpassing its predecessors in performance and scalability.

\section{Experiments}
\label{sec:experiments}
We evaluate the performance of the proposed S7 model across several tasks. In Sec.~\ref{subsec:exp_setup}, we describe the experimental setup, including training protocols and evaluation metrics, and in Sec.~\ref{subsec:event_datasets}, we assess the model on neuromorphic event data. Sec.~\ref{subsec:lra} focuses on long-range sequence modeling with the LRA benchmark \citep{tay2021long}. Dynamical system prediction tasks, including Pendulum Regression, are explored in Sec.~\ref{subsec:pendulum_dynamical_system}. Finally, in Sec.~\ref{subsec:har_genomics}, we evaluate S7 on human activity recognition and genomics classification. In Sec.~\ref{subsec:walker2d}, we also explore the Walker2D kinematic simulation. In the \ref{appendix:ablation_study}, we perform an ablation study to evaluate the importance of the reparameterization method in improving model performance.

\subsection{Experimental Setup}
\label{subsec:exp_setup}
We follow the experimental setups for training and evaluation described in EventSSM \citep{schone2024scalable} for Sec.~\ref{subsec:event_datasets}, S5 \citep{smith2023simplified} for Sec~\ref{subsec:lra}, S5 \& LEM \citep{rusch2022lem} for Sec.~\ref{subsec:pendulum_dynamical_system} and ODE-LSTM \citep{lechner2020longterm} \& LEM for Sec.~\ref{subsec:har_genomics}. Specifically, we use a cosine learning rate schedule for all datasets throughout the training process. Separate weight decay is applied to the SSM parameters to control regularization. We select the best validation epoch for final testing to ensure optimal performance. Cross-entropy is employed as the loss function for all tasks except Pendulum \& FitzHugh-Nagumo system (Sec.~\ref{subsec:pendulum_dynamical_system}) and Walker2D, for which we used MSE.
All models are trained using Tesla V100 and Quadro RTX 8000 GPUs. The training code is implemented in JAX \citep{jax2018github}, while Tonic \citep{lenz_2021} is used for fast event-based data loading.
Additionally, we introduce a separate weight decay for the dense layers responsible for input dependence, allowing these layers to be fine-tuned independently of the core SSM parameters. This enables better control over regularization in the filtering layers and further leads to improved generalization and performance across different tasks.

\subsection{Event (Neuromorphic) Datasets}
\label{subsec:event_datasets}
We process raw, asynchronous event streams in these datasets, fully leveraging S7’s ability to model long-range temporal dependencies directly from raw events. Unlike the majority of  approaches in this context, which convert events into frames or other representations, only EventSSM \citep{schone2024scalable} and our S7 operate directly on the raw event data. This allows us better to capture the fine-grained dynamics unique to event-based data streams.

\begin{table}[H]
\centering
\resizebox{\columnwidth}{!}{%
\begin{tabular}{lccccccc}
\toprule
\textbf{Dataset} & \textbf{LSN} & \textbf{SGN} & \textbf{CNN+S5} & \textbf{BET} & \textbf{EventSSM} & \textbf{S7 (Ours)} \\
\midrule
\textbf{DVS-Gesture} & - & - & 97.8 (6.8M) & 98.8 (-) & 97.7 (5.4M) & \textbf{99.2 (4.1M)} \\
\textbf{Spiking Heidelberg Digits} & 95.1 (0.2M) & 94.6 (3.9M) & 93.8 (3.9M) & - & 95.5 (0.4M) & \textbf{96.3 (0.5M)} \\
\textbf{Spiking Speech Commands} & 80.7 (2.5M) & 77.4 (3.9M) & 81.2 (4.2M) & - & 87.1 (0.6M) & \textbf{88.2 (0.6M)} \\
\bottomrule
\end{tabular}
}
\caption{Accuracy comparison of LSN \citep{hammouamri2024learning}, SGN \citep{bittar2022surrogate}, CNN+S5, BET \citep{liu_2022}, EventSSM \citep{schone2024scalable}, and S7 on event datasets. The number of model parameters (in millions) is shown in parentheses.}
\label{tab:event_datasets}
\end{table}

\paragraph{DVS-Gesture}
The DVS-Gesture dataset \citep{Amir2017ALP} features 11 hand gestures recorded by a DVS128 sensor at 128x128 resolution, with over 1,100 training samples and up to 1.5 million events per sequence. Following EventSSM’s data augmentations \citep{schone2024scalable}, we apply spatial-jitter, time-jitter, and CutMix. S7 achieves 99.2\% accuracy, surpassing EventSSM (97.7\%) and the best dense method BET \citep{liu_2022} (98.8\%). Full results are in Table \ref{tab:event_datasets}.

\paragraph{Spiking Heidelberg Digits (SHD)}
The SHD dataset \citep{Cramer2019TheHS} challenges models with 20 classes of spoken digits converted into spike trains. It includes 8,200 training samples with sequences having a median of 8,000 events. This dataset tests a model’s ability to process event-based audio data. S7 achieved an accuracy of 96.3\%, outperforming both the best dense method (95.1\%) and EventSSM (95.5\%), with only a slight increase in parameters (0.5M vs. 0.4M). Additionally, compared to dense methods such as LSN \citep{hammouamri2024learning} and SGN \citep{bittar2022surrogate}, S7 demonstrates superior performance with far fewer parameters.

\paragraph{Spiking Speech Commands (SSC)}
The SSC dataset \citep{Cramer2019TheHS} includes 35 classes of spoken commands converted into spike trains, with sequences having a median of 8,100 events and a total of 75,500 training samples. We applied time-jitter, channel-jitter, and CutMix augmentations \citep{Yun2019CutMixRS} for this large-scale dataset. S7 achieved 88.2\% accuracy, outperforming both the best dense method (80.7\%) and EventSSM (87.1\%) while maintaining the exact parameter count (0.6M) as EventSSM and using up to 7x fewer parameters than the best dense models.

\subsection{Long Range Arena}
\label{subsec:lra}
We adopt the setup used in the S5 framework \citep{smith2023simplified}, converting integer-tokenized datasets into event-based formats with regular time gaps, treating tokens as events with a polarity of 1. Experiments were conducted on all six Long Range Arena (LRA) tasks: ListOps, Text, Retrieval, Image, Pathfinder, and Path-X, with results summarized in Table~\ref{tab:lra_results}. Among the models compared, S6 (Mamba) and our S7 are the only ones employing input-dependent dynamics. Our results demonstrate that S7 outperforms Mamba across the LRA benchmarks (71.82 vs 66.59 average), highlighting the effectiveness of our approach and establishing S7 as the best input-dependent model for long, challenging sequence modeling. Notably, S7 achieves state-of-the-art performance on the ListOps and Retrieval tasks, with accuracies of 63.77\% and 91.80\%, respectively. However, it is challenging for input-dependent models to surpass Linear Time-Invariant (LTI) methods like S5 on certain tasks because input-dependency can lead to forgetting some tokens, which hurts performance when precise retention of input information is crucial. While Mega achieves the highest average score overall, it operates with quadratic complexity in sequence length, making it less scalable for long sequences. In contrast, S7 offers a favorable trade-off between performance and scalability, achieving competitive results with linear computational complexity.

\begin{table}[H]
\centering
\resizebox{0.75\columnwidth}{!}{%
\begin{tabular}{lcccccc}
\toprule
\textbf{Dataset} & \textbf{Mega} & \textbf{S4} & \textbf{S5} & \textbf{S6 (Mamba)} & \textbf{LRU} & \textbf{S7 (Ours)} \\
\midrule
ListOps & \underline{63.14} & 59.60 & 62.15 & 38.02 & 60.20 & \textbf{63.77} \\
Text & \textbf{90.43} & 86.82 & 89.31 & 82.98 & \underline{89.40} & 87.22 \\
Retrieval & 91.25 & 90.90 & \underline{91.40} & 72.14 & 89.90 & \textbf{91.80} \\
Image & \textbf{90.44} & 88.65 & 88.00 & 69.82 & \underline{89.00} & 61.14 \\
Pathfinder & \textbf{96.01} & 94.20 & \underline{95.33} & 69.26 & 95.10 & 65.52 \\
Path-X & \underline{97.98} & 96.35 & \textbf{98.58} & 67.32 & 94.20 & 61.50 \\
\midrule
\textbf{Average} & \textbf{88.21} & 86.09 & \underline{87.46} & 66.59 & 86.30 & 71.82 \\
\bottomrule
\end{tabular}%
}
\caption{Accuracy comparison of Mega \citep{ma2022mega}, S4 \citep{gu2022efficiently}, S5 \citep{smith2023simplified}, S6 (Mamba) \citep{mamba}, LRU \citep{Orvieto2023ResurrectingRN}, and S7 across LRA tasks. The best result for each task is highlighted in \textbf{bold}, while the second-best result is \underline{underlined}. The overall best and second-best results are similarly bolded and underlined in the average row.}
\label{tab:lra_results}
\end{table}

\subsection{Pendulum Regression \& Multiscale Dynamical System Prediction}
\label{subsec:pendulum_dynamical_system}
\paragraph{Pendulum Regression}
Inspired by prior work \citep{becker19a, schirmer22a}, this task involves predicting the sine and cosine of a pendulum's angle from sequences of noisy grayscale images. The input consists of 24x24 pixel images of a pendulum driven by random torque, with added temporally correlated noise. A pendulum is simulated for 100 timesteps, and 50 frames are randomly selected for each sample. We use a dataset split of 2000/1000/1000 samples for training, validation, and testing. 

\paragraph{Multiscale Dynamical System Prediction}
The FitzHugh-Nagumo system (FitzHugh, 1955) models fast-slow nonlinear dynamics simulating neuronal action potentials. Following \citep{rusch2022lem}, we approximate this system on sequences of length $N=1000$, generating multiple datasets for training, validation, and testing. We compare S7 against various RNN-based models, including the state-of-the-art LEM \citep{rusch2022lem} and S5 \citep{smith2023simplified}.

\begin{table}[H]
\centering
\begin{tabular}{cc}
    \begin{minipage}[t]{.45\linewidth}
        \centering
        \resizebox{0.96\linewidth}{!}{%
        \begin{tabular}{lcc}
            \toprule
            \textbf{Model} & \textbf{Relative Speed} & \textbf{MSE} ($\times 10^{-3}$) \\
            \midrule
            RKN & 1.9$\times$ & 8.43 \\
            RKN-$\Delta t$ & 1.9$\times$ & 5.09 \\
            CRU & 1.0$\times$ & 4.63 \\
            S5 & 86$\times$ & 3.41 \\
            \textbf{S7 (Ours)} & \textbf{357$\times$} & \textbf{2.91} \\
            \bottomrule
        \end{tabular}
        }
        \caption{Performance comparison for the Pendulum Regression task. Other models are the same ones as in \citet{smith2023simplified}. S7 achieves the best MSE, outperforming all other models.}
        \label{tab:pendulum_regression}
    \end{minipage} &

    \begin{minipage}[t]{.45\linewidth}
        \centering
        \resizebox{0.96\linewidth}{!}{%
        \begin{tabular}{lccc}
            \toprule
            \textbf{Model} & \textbf{Error} $(\times 10^{-2})$ & \textbf{\# Units} & \textbf{\# Params} \\
            \midrule
            LSTM         & 1.2     & 16  & 1k \\
            expRNN       & 2.3     & 50  & 1k \\
            LipschitzRNN & 1.8     & 24  & 1k \\
            FastGRNN     & 2.2     & 34  & 1k \\
            coRNN        & 0.4     & 24  & 1k \\
            LEM          & 0.2     & 16  & 1k \\
            S5           & 0.0024  & 16  & 1k \\
            \textbf{S7 (Ours)}  & \textbf{0.0013} & 16  & 1k \\
            \bottomrule
        \end{tabular}
        }
        \caption{Test $L^2$ error on FitzHugh-Nagumo system prediction. S7 achieves the best result, outperforming LEM and S5. Other models are the same as in \citet{rusch2022lem}.}
        \label{tab:fitzhugh}
    \end{minipage}
\end{tabular}
\end{table}

\paragraph{Results}
In the Pendulum Regression task (Table~\ref{tab:pendulum_regression}), S7 achieves the lowest Mean Squared Error (MSE) of 2.91, outperforming all other models. The large speedup highlights S7's efficiency and ability to handle irregular, noisy inputs. For the Multiscale Dynamical System Prediction task (Table~\ref{tab:fitzhugh}), S7 significantly outperforms LEM and S5, achieving the lowest test $L^2$ error of 0.0013. This result demonstrates the advantage of S7’s input-dependent recurrent structure for capturing multiscale system dynamics.

\subsection{Human Activity Recognition \& Genomics Classification}
\label{subsec:har_genomics}
\paragraph{Human Activity Recognition}
We evaluate the performance of S7 on the Human Activity Recognition dataset from the UCI repository \citep{dua2019}, a per-time-step classification task involving data collected from four inertial measurement sensors located on a person's arms and feet. Each sensor outputs measurements at fixed intervals of 211 ms, with slight random phase shifts, which introduces irregular sampling in the time-series data. The task is to classify the person's current activity at each time step, making this a challenging sequence modeling problem where every time step presents a new error signal to the network. Other models used for comparison in Table \ref{tab:har} are the same as in \citet{lechner2020longterm}.

\paragraph{Genomics Classification}
The EigenWorms dataset \citep{bagnall2018} involves classifying worms into either the wild-type or one of four different mutants based on motion data collected over very long sequences. Each sequence has a $N = 17984$ length, making it a challenging task that tests the model's ability to capture very long-term dependencies. Prior research \citep{rusch2022lem, morrill2021neuralrough} has demonstrated that EigenWorms exhibits dependencies that extend beyond 10,000 timesteps, requiring robust sequence modeling techniques to achieve high classification accuracy. Other models in Table~\ref{tab:eigenworms} are the same as one in \citet{rusch2022lem}.

\begin{table}[H]
\centering
\begin{tabular}{cc}
    \begin{minipage}[t]{.48\linewidth}
        \centering
        \resizebox{0.70\linewidth}{!}{%
        \begin{tabular}{lcc}
            \toprule
            \textbf{Model} & \textbf{Accuracy (\%)} \\
            \midrule
            ODE-RNN           & 80.43 $\pm$ 1.55 \\
            CT-RNN            & 83.65 $\pm$ 1.55 \\
            Augmented LSTM    & 84.11 $\pm$ 0.68 \\
            CT-GRU            & 79.48 $\pm$ 2.12 \\
            RNN Decay         & 62.89 $\pm$ 3.87 \\
            Bi-directional RNN & 83.85 $\pm$ 0.45 \\
            GRU-D             & 83.57 $\pm$ 0.40 \\
            PhasedLSTM        & 83.33 $\pm$ 0.69 \\
            GRU-ODE           & 82.56 $\pm$ 2.63 \\
            CT-LSTM           & 84.13 $\pm$ 0.11 \\
            ODE-LSTM          & 84.15 $\pm$ 0.33 \\
            \textbf{S7 (Ours)} & \textbf{94.09 $\pm$ 0.001} \\
            \bottomrule
        \end{tabular}
        }
        \caption{Per timestep classification. Human Activity Recognition task. Test accuracy (mean $\pm$ std, $N = 5$ experiments for each model).}
        \label{tab:har}
    \end{minipage} &

    \begin{minipage}[t]{.48\linewidth}
        \centering
        \resizebox{\linewidth}{!}{%
        \begin{tabular}{lccc}
            \toprule
            \textbf{Model} & \textbf{Test Accuracy (\%)} & \textbf{\# Units} & \textbf{\# Params} \\
            \midrule
            NRDE               & 86.8  & 32  & 35k \\
            expRNN             & 50.1  & 64  & 2.8k \\
            IndRNN (2 layers)  & 54.5  & 32  & 1.6k \\
            LSTM               & 48.6  & 32  & 5.3k \\
            BiLSTM+1d-conv     & 47.8  & 32  & 5.8k \\
            chrono-LSTM        & 89.0  & 32  & 5.3k \\
            coRNN              & 89.7  & 32  & 2.4k \\
            UniCORNN (2 layers) & 93.3  & 32  & 1.5k \\
            LEM                & 94.1  & 32  & 5.3k \\
            \textbf{S7 (Ours)} & \textbf{97.5} & 16  & 12k \\
            \bottomrule
        \end{tabular}
        }
        \caption{Test accuracies on EigenWorms using the best-performing models. S7 achieves the best result with the fewest units, demonstrating its effectiveness in capturing long-term dependencies in very long sequences.}
        \label{tab:eigenworms}
    \end{minipage}
\end{tabular}
\end{table}

\paragraph{Results}
In the Human Activity Recognition task (Table~\ref{tab:har}), S7 achieves a remarkable accuracy of 94.09\%, significantly outperforming all baseline models. This substantial improvement underscores S7's ability to effectively handle irregularly sampled, noisy time-series data. For the Genomics Classification task (Table~\ref{tab:eigenworms}), S7 further demonstrates its superiority by attaining a state-of-the-art accuracy of 97.5\% using only 16 units and 12k parameters. This result not only surpasses the previous best LEM model \citep{rusch2022lem} but also highlights S7's efficiency in managing very long sequences and capturing long-term dependencies on very challenging dataset with high variance.

\subsection{Walker2d Kinematic Simulation}
\label{subsec:walker2d}
In this experiment, we evaluated the ability of our proposed S7 model to simulate the kinematic dynamics of the Walker2d-v2 environment. The goal of the task was to predict the per-timestep regression for the kinematic simulation of the MuJoCo physics engine. The dataset was irregularly sampled, and we introduced additional complexity by overwriting a small percentage of the actions with random actions and skipping 10\% of the time steps.
Table~\ref{tab:walker2d_results} shows that our S7 model achieved the best performance with an MSE of 0.114 (with a mean of 0.120 and std of 0.005), outperforming the other methods by a significant margin. Other models used for comparison are the same ones as in \citet{lechner2020longterm}.

\begin{figure}[t]
\centering
\begin{minipage}[t]{0.45\textwidth}
    \centering
    \resizebox{0.75\textwidth}{!}{%
    \begin{tabular}{lcc}
        \toprule
        \textbf{Model} & \textbf{MSE} \\
        \midrule
        ODE-RNN        & 1.904 $\pm$ 0.061 \\
        CT-RNN         & 1.198 $\pm$ 0.004 \\
        Augmented LSTM & 1.065 $\pm$ 0.006 \\
        CT-GRU         & 1.172 $\pm$ 0.011 \\
        RNN-Decay      & 1.406 $\pm$ 0.005 \\
        Bi-directional RNN & 1.071 $\pm$ 0.009 \\
        GRU-D          & 1.090 $\pm$ 0.034 \\
        PhasedLSTM     & 1.063 $\pm$ 0.010 \\
        GRU-ODE        & 1.051 $\pm$ 0.018 \\
        CT-LSTM        & 1.014 $\pm$ 0.014 \\
        \textbf{S7 (Ours)} & \textbf{0.120 $\pm$ 0.005} \\
        \bottomrule
    \end{tabular}
    }
    \caption{Per time-step regression results on the Walker2d kinematic dataset. Our S7 model achieves the lowest MSE.}
    \label{tab:walker2d_results}
\end{minipage}
\hfill
\begin{minipage}[t]{0.50\textwidth}
    \vspace{-1.5cm}  
    \centering
    \includegraphics[width=0.32\textwidth]{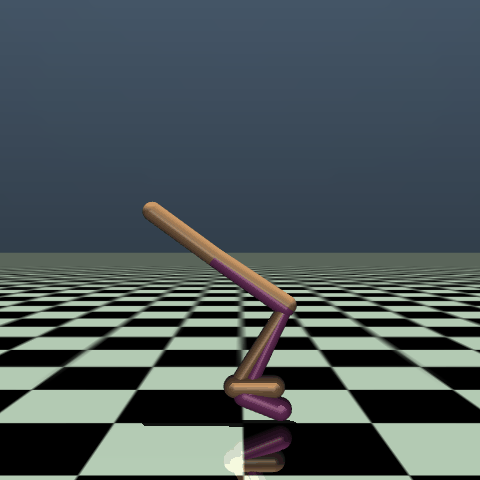}
    \includegraphics[width=0.32\textwidth]{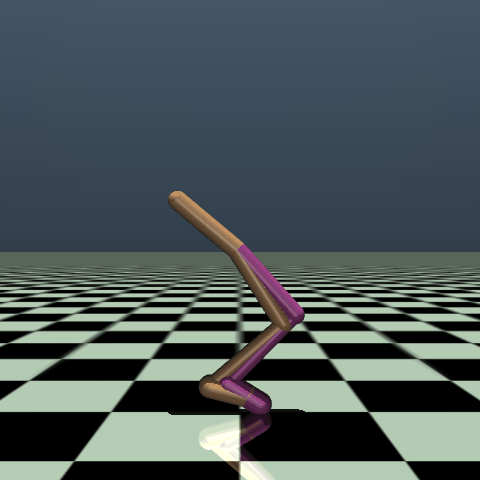}
    \includegraphics[width=0.32\textwidth]{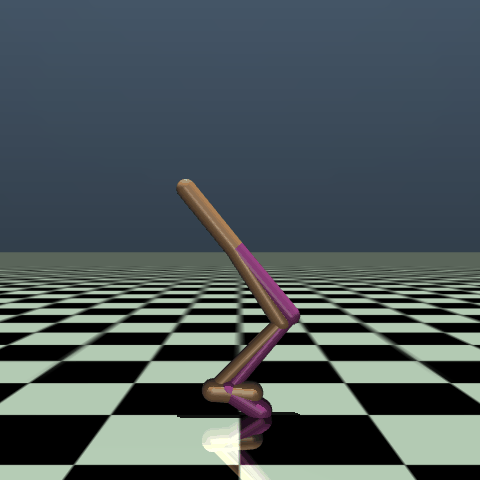}
    \caption{Walker2D kinematic dataset frames visualized.}
    \label{fig:walker2d_images}
\end{minipage}
\end{figure}

\section{Conclusion}
\label{sec:conclusion}
In this work, we introduced \textbf{S7}, a novel state-space model that effectively balances efficiency, adaptability, and stability in long-sequence modeling tasks—building upon the foundation of prior models like S4 \citep{gu2022efficiently}, S5 \citep{smith2023simplified} and Mamba \citep{mamba}. S7 leverages input-dependent dynamics and stable reparameterization to improve its ability to capture long-range dependencies while maintaining computational efficiency. The key contribution of S7 is its ability to dynamically adjust state transitions based on input content, allowing for selective filtering and content-based reasoning without adding unnecessary complexity.
Through extensive experimentation on a diverse range of benchmarks, including event-based neuromorphic tasks, long-range sequence modeling, dynamical system prediction, and real-world applications like human activity recognition and genomics classification, S7 has demonstrated its superiority. It achieves state-of-the-art results in multiple domains while preserving computational efficiency, even in challenging settings that require processing sequences with irregular sampling and long-term dependencies.

Moreover, incorporating stable reparameterization ensures the robustness and stability of S7 during training and inference, making it highly scalable for real-world applications. The model’s ability to handle asynchronous data streams and complex temporal patterns extends its utility to a wide range of practical tasks, from neuromorphic vision to genomic analysis.
In conclusion, S7 offers a significant advancement in sequence modeling, pushing the boundaries of what is possible in terms of scalability, generalization, and adaptability. By achieving an optimal balance between simplicity and performance, S7 sets a new standard for state-space models, opening new avenues for research and application across numerous domains in artificial intelligence.

\section{Acknowledgment}
\vspace{-1.5ex}
\label{sec:acknowledgment}
This work was supported by the European Research Council (ERC) under grant agreement No. 864042 (AGILEFLIGHT).

\bibliography{iclr2025_conference}
\bibliographystyle{iclr2025_conference}

\newpage
\clearpage

\appendix
\section{Appendix}
\subsection{Notational and Theoretical Background}
\label{appendix:background}
To help with the understanding of the theorems, proofs, and the main content of our paper, we provide essential mathematical background on Sobolev norms, properties of functionals, Lipschitz continuity, Grönwall's inequality, and stable reparameterization conditions.

\paragraph{Sobolev Spaces and Sobolev Norms}
Sobolev spaces are a fundamental concept in functional analysis, providing a framework for analyzing functions with weak derivatives. For an open subset $\Omega \subset \mathbb{R}^n$, the Sobolev space $W^{k, p}(\Omega)$ consists of functions whose derivatives up to order $k$ are in $L^p(\Omega)$.

In our context, we consider the Sobolev space $W^{1, \infty}$, which is the space of functions $f: \mathbb{R} \to \mathbb{R}$ such that both $f$ and its first derivative $f'$ are essentially bounded. The Sobolev norm in $W^{1, \infty}$ is defined as:
\begin{equation}
\label{eq:sobolev_norm}
\| f \|_{W^{1, \infty}} = \| f \|_{L^\infty} + \| f' \|_{L^\infty},
\end{equation}
where $\| f \|_{L^\infty} = \text{ess sup}_{x \in \Omega} | f(x) |$.

In our analysis, we use the Sobolev-type norm to measure the difference between the target functional $\mathbf{H}$ and the approximate functional $\widehat{\mathbf{H}}$:
\begin{equation}
\label{eq:sobolev_norm_functional}
\| \mathbf{H} - \widehat{\mathbf{H}} \|_{W^{1, \infty}} = \sup_k \left( \| H_k - \widehat{H}_k \|_\infty + \left\| \frac{d H_k}{d k} - \frac{d \widehat{H}_k}{d k} \right\|_\infty \right),
\end{equation}
where the supremum is taken over all time steps $k$, and the norm $\| \cdot \|_\infty$ denotes the essential supremum over the input space.

\paragraph{Properties of Functionals}
A \emph{functional} is a mapping from a space of functions to the real numbers. In our context, we consider linear functionals $\mathbf{H}: L^\infty(\mathbb{R}) \to \mathbb{R}$ that satisfy the following properties:

\begin{itemize}
    \item \textbf{Boundedness:} The functional $\mathbf{H}$ is bounded if there exists a constant $M > 0$ such that:
    \begin{equation}
    \label{eq:bounded_functional}
    | \mathbf{H}(u) | \leq M \| u \|_{L^\infty}, \quad \forall u \in L^\infty(\mathbb{R}).
    \end{equation}

    \item \textbf{Linearity:} The functional is linear if:
    \begin{equation}
    \label{eq:linear_functional}
    \mathbf{H}(a u + b v) = a \mathbf{H}(u) + b \mathbf{H}(v), \quad \forall u, v \in L^\infty(\mathbb{R}), \quad \forall a, b \in \mathbb{R}.
    \end{equation}

    \item \textbf{Causality:} The functional is causal if the value at time $k$ depends only on values of $u$ up to time $k$:
    \begin{equation}
    \label{eq:causal_functional}
    \mathbf{H}(u)(k) = F(u|_{(-\infty, k]}),
    \end{equation}
    where $F$ is some mapping, and $u|_{(-\infty, k]}$ denotes the restriction of $u$ to the interval $(-\infty, k]$.

    \item \textbf{Continuity:} The functional is continuous if small changes in $u$ lead to small changes in $\mathbf{H}(u)$:
    \begin{equation}
    \label{eq:continuous_functional}
    \lim_{\| u - v \|_{L^\infty} \to 0} | \mathbf{H}(u) - \mathbf{H}(v) | = 0.
    \end{equation}

    \item \textbf{Regularity:} The functional has certain smoothness properties, such as differentiability with respect to $k$.
\end{itemize}

\paragraph{Lipschitz Continuity}

A function $f: \mathbb{R}^n \to \mathbb{R}^m$ is \emph{Lipschitz continuous} if there exists a constant $L \geq 0$ such that:
\begin{equation}
\label{eq:lipschitz_continuity}
\| f(x) - f(y) \| \leq L \| x - y \|, \quad \forall x, y \in \mathbb{R}^n.
\end{equation}
The smallest such $L$ is called the \emph{Lipschitz constant} of $f$. Lipschitz continuity ensures that the function does not change too rapidly and is essential for proving stability and convergence results.

In our theorems, we assume that the mappings from the model parameters $\theta_m$ to the system matrices are Lipschitz continuous (Assumption \ref{assumption:1}), which is critical for controlling the effects of parameter perturbations on the system's behavior.

\paragraph{Grönwall's Inequality}
Grönwall's inequality is a powerful tool used to bound solutions of differential and integral inequalities. It states that if $u(t)$ is a non-negative, continuous function satisfying:
\begin{equation}
\label{eq:gronwall_inequality}
u(t) \leq a + \int_{t_0}^t b(s) u(s) \, ds, \quad t \geq t_0,
\end{equation}
where $a \geq 0$ and $b(s) \geq 0$, then:
\begin{equation}
\label{eq:gronwall_result}
u(t) \leq a \exp\left( \int_{t_0}^t b(s) \, ds \right).
\end{equation}

In our proofs, Grönwall's inequality is used to bound the growth of the difference between the perturbed and unperturbed solutions of the state equations, ensuring that small parameter changes lead to proportionally small changes in the system's state over time.

\paragraph{Stable Reparameterization Conditions}

Reparameterization functions are used to enforce stability constraints on the model parameters. A reparameterization function $f: \mathbb{R} \to \mathbb{R}$ maps raw parameters $w_j$ to model parameters $\theta_m = f(w_m)$. The stability condition requires that the function $f$ ensures the eigenvalues of the state transition matrix $\Lambda_k(u_k; \theta_m)$ have magnitudes less than one (Assumption \ref{assumption:2}).

Moreover, we impose a condition on $f$ that controls the effect of parameter perturbations on the state transition matrices:
\begin{equation}
\label{eq:stable_reparameterization_condition}
\sup_{w} \left[ \| f(w) \| \sup_{\| \tilde{w} - w \| \leq \beta} \int_{0}^{\infty} \left\| \Phi_{\tilde{w}}(k, s) - \Phi_{w}(k, s) \right\| \, dk \right] \leq g(\beta),
\end{equation}
for some continuous function $g: [0, \infty) \to [0, \infty)$ with $g(0) = 0$. This condition ensures that the difference between the perturbed and unperturbed state transition matrices vanishes as the parameter perturbation $\beta \to 0$, promoting stability in the approximation.

\paragraph{State Transition Matrix}
In systems with time-varying or input-dependent dynamics, the state transition matrix $\Phi(k, s)$ captures the cumulative effect of the state transition matrices from time $s$ to $k$. It satisfies the difference equation:
\begin{equation}
\Phi(k, s) = \Lambda_k(u_k; \theta_m) \Phi(k - 1, s), \quad \Phi(s, s) = I_m,
\end{equation}
where $I_m$ is the identity matrix of size $m$. The state transition matrix is crucial for expressing the solution to the state equation and analyzing the system's behavior over time.

\paragraph{Variation of Parameters}
The variation of parameters is a method for solving non-homogeneous linear differential or difference equations. For the difference equation:
\begin{equation}
\label{eq:state_difference_equation}
x_k = \Lambda_k x_{k-1} + B_k u_k + b_k,
\end{equation}
the solution can be expressed as:
\begin{equation}
\label{eq:variation_of_parameters}
x_k = \Phi(k, k_0) x_{k_0} + \sum_{s=k_0+1}^k \Phi(k, s) (B_s u_s + b_s),
\end{equation}
where $\Phi(k, s)$ is the state transition matrix. This representation allows us to analyze how inputs and initial conditions influence the system's state.

\paragraph{Bounding the Approximation Error}
In the context of our theorems, we aim to show that the sequence of approximate functionals $\{ \widehat{\mathbf{H}}(\cdot; \theta_m) \}_{m=1}^\infty$ converges to the target functional $\mathbf{H}$ in the Sobolev norm. The total approximation error $E(\beta)$ combines the error due to model capacity (which decreases as $m \to \infty$) and the error from parameter perturbations (controlled by $\beta$). By ensuring that both errors tend to zero, we establish that the approximation is stable and accurate.

\paragraph{Gradient Norm Scaling}

In Theorem \ref{theorem:2}, we analyze how the gradient of the loss function with respect to the raw parameters $w_j$ scales with the derivative of the reparameterization function $f$. The key result is that:
\begin{equation}
\label{eq:gradient_norm_scaling}
\left| \frac{\partial \text{Loss}}{\partial w_j } \right| \leq C_{\mathbf{H}, \widehat{\mathbf{H}}_m} \left| f'(w_j) \right|,
\end{equation}
where $C_{\mathbf{H}, \widehat{\mathbf{H}}_m}$ is a constant independent of $w_j$. This highlights the importance of choosing a reparameterization function with appropriate smoothness to ensure that gradients are well-behaved during optimization.

\paragraph{Ensuring Stability and Convergence}

Combining the above concepts, our analysis ensures that the input-dependent state-space models we propose are both stable and capable of providing accurate approximations of target functionals. The Lipschitz continuity and stability conditions prevent the system from exhibiting uncontrolled behavior, while the use of Sobolev norms allows us to measure approximation quality in terms of both the function value and its derivative.

The theoretical results provide a solid foundation for the practical effectiveness of our S7 model. By ensuring stability and controlling gradient norms, we can train deep models capable of handling long sequences with complex dependencies. The input-dependent dynamics enable the model to adapt to varying inputs, improving its ability to capture long-range dependencies and perform content-based reasoning without sacrificing computational efficiency.

\subsection{Proof of Theorem \ref{theorem:1}}
\label{appendix:theorem:1}
In this section, we prove the Theorem regarding the Existence of Stable Approximation by Stable Reparameterization with Input-Dependent Dynamics.
\begin{proof}
We begin by defining the target linear functional \( \mathbf{H} \) as follows:
\begin{equation}
H_k(\mathbf{u}) = \int_{-\infty}^k \rho(k - s) u_s \, ds,
\end{equation}
where \( \rho \) is an \( L_1 \)-integrable function, meaning \( \int_0^\infty |\rho(\tau)| \, d\tau < \infty \). The objective is to approximate \( \mathbf{H} \) using a sequence of state-space models with input-dependent dynamics. The modified state-space model takes the form:
\begin{equation}
\frac{d x_k}{d k} = \Lambda_k(u_k) x_k + B u_k + b,
\end{equation}
with the output defined as \( \hat{y}_k = c^\top x_k \), where \( c \in \mathbb{R}^{m} \) is the output weight vector. The key difference from prior models lies in the input dependence of \( \Lambda_k(u_k) \), which makes the state equation non-autonomous. This complexity implies that the solution to the state equation cannot simply be expressed using the exponential of a constant matrix. The solution \( x_k \) to the state equation involves the state transition matrix, which depends on the history of \( u_k \):
\begin{equation}
x_k = \Phi(k, k_0) x_{k_0} + \int_{k_0}^k \Phi(k, s) (B u_s + b) \, ds,
\end{equation}
where \( \Phi(k, s) \) is the state transition matrix from time step \( s \) to \( k \), satisfying \( \frac{d}{dk} \Phi(k, s) = \Lambda_k(u_k) \Phi(k, s) \) with \( \Phi(s, s) = I_m \). Since \( \Lambda(u_k) \) depends on \( u_k \), \( \Phi(k, s) \) depends on the entire input sequence \( u_{[s, k]} \). 

We approximate the state transition matrix using a piecewise constant approximation of \( \Lambda_k(u_k) \). This means we divide the interval \([k_0, k]\) into small subintervals \([k_{i-1}, k_i]\) where \( \Lambda_k(u_k) \) is approximately constant, allowing us to express the state transition matrix as \( \Phi(k, k_0) \approx \prod_{i=1}^N e^{\Lambda_k(u_{k_{i-1}})(k_i - k_{i-1})} \). This approximation becomes more accurate as the intervals become smaller. The model output is given by \( \hat{y}_k = c^\top x_k \), while the target functional is \( H_k(\mathbf{u}) = \int_{-\infty}^k \rho(k - s) u_s \, ds \). Our goal is to show that \( \hat{y}_k \) approximates \( H_k(\mathbf{u}) \) under appropriate conditions. The total approximation error \( E(\beta) \) can be expressed as:
\begin{equation}
E(\beta) = \sup_{|\tilde{\theta} - \theta| \leq \beta} \| \mathbf{H} - \widehat{\mathbf{H}}(\cdot; \tilde{\theta}) \|_{W^{1, \infty}}.
\end{equation}
where \( \theta \) represents the model parameters, and \( \tilde{\theta} \) represents the perturbed parameters within a radius \( \beta \). We need to bound \( E(\beta) \) and show that \( \lim_{\beta \to 0} E(\beta) = 0 \).

Perturbations in \( \theta \) affect both \( \Lambda_k(u_k) \) and the state transition matrix \( \Phi(k, s) \), but if \( \Lambda_k(u_k; \theta) \) depends smoothly on \( \theta \) and the mapping from \( \theta \) to \( \Lambda_k(u_k; \theta) \) is Lipschitz continuous, then small perturbations in \( \theta \) yield small perturbations in the system dynamics.

To analyze the difference between the perturbed and unperturbed state transition matrices, consider \( \Phi(k, s) \) for the unperturbed case and \( \tilde{\Phi}(k, s) \) for the perturbed case. We seek to bound \( \| \tilde{\Phi}(k, s) - \Phi(k, s) \| \). Assuming \( \Lambda_k(u_k; \theta) \) is Lipschitz in \( \theta \), and that \( u_k \) is bounded, we can establish that:
\begin{equation}
\| \tilde{\Phi}(k, s) - \Phi(k, s) \| \leq L_\Phi \beta (k - s),
\end{equation}
for some constant \( L_\Phi \), where \( \beta = \| \tilde{\theta} - \theta \| \). This gives us a first step in bounding the overall approximation error. The error in the output can now be written as \( |\hat{y}_k - H_k(\mathbf{u})| = | c^\top ( x_k - x_k^{\text{target}} ) | \), where \( x_k^{\text{target}} \) corresponds to the hidden state that exactly reproduces \( H_k(\mathbf{u}) \). The difference \( x_k - x_k^{\text{target}} \) arises from two sources: the model approximation error and the perturbation in parameters. We express this as:
\begin{equation}
|\hat{y}_k - H_k(\mathbf{u})| \leq \| c \| \cdot \| x_k - x_k^{\text{target}} \|.
\end{equation}
To control the error due to parameter perturbations, we analyze the difference between the hidden states \( \tilde{x}_k \) (with perturbed parameters \( \tilde{\theta} \)) and \( x_k \) (with original parameters \( \theta \)). The difference \( \delta x_k = \tilde{x}_k - x_k \) satisfies the following differential equation:
\begin{equation}
\frac{d \delta x_k}{d k} = \Lambda_k(u_k; \tilde{\theta}) \delta x_k + [ \Lambda_k(u_k; \tilde{\theta}) - \Lambda_k(u_k; \theta) ] x_k + [ B(\tilde{\theta}) - B(\theta) ] u_k + [ b(\tilde{\theta}) - b(\theta) ].
\end{equation}
Using the Lipschitz continuity of \( \Lambda_k(u_k; \theta) \), \( B(\theta) \), and \( b(\theta) \), we know that:
\begin{equation}
\| \Lambda_k(u_k; \tilde{\theta}) - \Lambda_k(u_k; \theta) \| \leq L_\Lambda \beta,
\end{equation}
with similar bounds for \( B \) and \( b \). Applying Grönwall’s inequality, we bound the growth of \( \delta x_k \) as:
\begin{equation}
\| \delta x_k \| \leq \int_{k_0}^k e^{L (k - s)} \left( L_\Lambda \| x_s \| \beta + L_B \| u_s \| \beta + L_b \beta \right) ds.
\end{equation}
Given that both \( x_s \) and \( u_s \) are bounded and that \( k - s \) remains finite, the integral yields a bound of the form \( \| \delta x_k \| \leq C \beta \), where \( C \) is a constant depending on the system's bounds.

Finally, this leads to the bound on the output error:
\begin{equation}
|\hat{y}_k(\tilde{\theta}) - \hat{y}_k(\theta)| = | c^\top ( \tilde{x}_k - x_k ) | \leq \| c \| \cdot \| \delta x_k \| \leq \| c \| C \beta.
\end{equation}

Thus, the total approximation error \( E(\beta) \) satisfies:
\begin{equation}
E(\beta) \leq E(0) + K \beta,
\end{equation}
where \( E(0) \to 0 \) as \( m \to \infty \) and \( K \) is a constant. Therefore, \( \lim_{\beta \to 0} E(\beta) = 0 \), demonstrating that the approximation is stable as the model size grows (sequence of state-space models provides a stable approximation of the target functional).
\end{proof}

\subsection{Proof of Theorem \ref{theorem:2}}
\label{appendix:theorem:2}
\begin{proof}
We aim to establish that under the given assumptions, the gradient of the loss function with respect to the trainable parameter \( w_j \) is bounded by:
\begin{equation}
\left| \frac{\partial \text{Loss}}{\partial w_j} \right| \leq C_{\mathbf{H}, \widehat{\mathbf{H}}_m} | f'(w_j) |,
\end{equation}
where \( C_{\mathbf{H}, \widehat{\mathbf{H}}_m} \) is a constant independent of \( w_j \).

Consider the loss function defined as:
\begin{equation}
\text{Loss} = \sup_{k} \| H_k(\mathbf{u}) - \hat{y}_k(\mathbf{u}) \|_\infty,
\end{equation}
where \( H_k(\mathbf{u}) \) is the target functional, and \( \hat{y}_k(\mathbf{u}) \) is the model output at time step \( k \) given input \( \mathbf{u} \). Since the loss involves a supremum over inputs \( \mathbf{u} \) with \( \| \mathbf{u} \|_\infty \leq 1 \), we focus on bounding the gradient of \( \hat{y}_k(\mathbf{u}) \) with respect to \( w_j \).

The gradient of the loss with respect to \( w_j \) is given by:
\begin{equation}
\left| \frac{\partial \text{Loss}}{\partial w_j} \right| = \left| \frac{\partial}{\partial w_j} \sup_{\|\mathbf{u}\|_\infty \leq 1} \| H_k(\mathbf{u}) - \hat{y}_k(\mathbf{u}) \|_\infty \right|.
\end{equation}
Noting that \( H_k(\mathbf{u}) \) does not depend on \( w_j \), we can write:
\begin{equation}
\left| \frac{\partial \text{Loss}}{\partial w_j} \right| \leq \sup_{\|\mathbf{u}\|_\infty \leq 1} \left| \frac{\partial \hat{y}_k(\mathbf{u}) }{\partial w_j} \right|.
\end{equation}
Our goal is thus to bound \( \left| \frac{\partial \hat{y}_k(\mathbf{u}) }{\partial w_j} \right| \). The model output is defined as:
\begin{equation}
\hat{y}_k = c(\theta_m)^\top x_k,
\end{equation}
where \( c(\theta_m) \in \mathbb{R}^m \) is a parameter-dependent vector, and \( x_k \in \mathbb{R}^m \) is the hidden state at time step \( k \). Taking the derivative of \( \hat{y}_k \) with respect to \( w_j \), we have:
\[
\frac{\partial \hat{y}_k }{\partial w_j } = \left( \frac{\partial c(\theta_m) }{\partial w_j } \right)^\top x_k + c(\theta_m)^\top \frac{\partial x_k }{\partial w_j }.
\]
The first term involves the derivative of \( c(\theta_m) \), and the second term consists of the derivative of the hidden state \( x_k \).

Since \( c(\theta_m) \) is Lipschitz continuous with respect to \( \theta_m \) (Assumption \ref{assumption:1}), and \( \theta_m = f(w_m) \), where \( w_m \) is the vector of trainable parameters, we can bound the first term using the chain rule:
\begin{equation}
\left\| \frac{\partial c(\theta_m) }{\partial w_j } \right\| = \left\| \frac{\partial c(\theta_m) }{\partial \theta_m } \cdot \frac{\partial \theta_m }{\partial w_j } \right\| \leq L_c \left\| \frac{\partial \theta_m }{\partial w_j } \right\| = L_c | f'(w_j) |,
\end{equation}
where \( L_c \) is the Lipschitz constant of \( c \) with respect to \( \theta_m \), and \( f'(w_j) \) is the derivative of the reparameterization function \( f \) with respect to \( w_j \).

To bound the second term, we need to compute \( \delta x_k^j := \frac{\partial x_k }{\partial w_j } \). Differentiating the state equation with respect to \( w_j \), we obtain:
\begin{equation}
\frac{d \delta x_k^j }{d k } = \Lambda_k(u_k; \theta_m) \delta x_k^j + \left( \frac{\partial \Lambda_k(u_k; \theta_m) }{\partial w_j } \right) x_k + \left( \frac{\partial B(\theta_m) }{\partial w_j } \right) u_k + \frac{\partial b(\theta_m) }{\partial w_j }.
\end{equation}
This is a non-homogeneous linear difference equation for \( \delta x_k^j \).

Using the chain rule and the Lipschitz continuity of \( \Lambda_k(u_k; \theta_m) \), \( B(\theta_m) \), and \( b(\theta_m) \) with respect to \( \theta_m \) (Assumption \ref{assumption:1}), we have:
\begin{equation}
\left\| \frac{\partial \Lambda_k(u_k; \theta_m) }{\partial w_j } \right\| \leq L_\Lambda | f'(w_j) |,\quad
\left\| \frac{\partial B(\theta_m) }{\partial w_j } \right\| \leq L_B | f'(w_j) |,\quad
\left\| \frac{\partial b(\theta_m) }{\partial w_j } \right\| \leq L_b | f'(w_j) |,
\end{equation}
where \( L_\Lambda \), \( L_B \), and \( L_b \) are the Lipschitz constants of \( \Lambda_k \), \( B \), and \( b \) with respect to \( \theta_m \), respectively.

The solution to the difference equation for \( \delta x_k^j \) can be expressed using the variation of parameters formula:
\begin{equation}
\delta x_k^j = \int_{k_0}^k \Phi(k, s) \left( \left( \frac{\partial \Lambda_k(u_s; \theta_m) }{\partial w_j } \right) x_s + \left( \frac{\partial B(\theta_m) }{\partial w_j } \right) u_s + \frac{\partial b(\theta_m) }{\partial w_j } \right) ds,
\end{equation}
where \( \Phi(k, s) \) is the state transition matrix given by:
\begin{equation}
\Phi(k, s) = \mathcal{T} \exp\left( \int_s^k \Lambda_k(u_\tau; \theta_m) d\tau \right),
\end{equation}
and \( \mathcal{T} \) denotes the time-ordering operator.

Under Assumption \ref{assumption:2}, the system is uniformly asymptotically stable; thus, there exist constants \( M > 0 \) and \( \alpha > 0 \) such that:
\begin{equation}
\| \Phi(k, s) \| \leq M e^{-\alpha (k - s)}, \quad \text{for all } k \geq s.
\end{equation}
This property ensures that the effect of the initial conditions and perturbations diminishes exponentially over time. Since the hidden states \( x_s \) and inputs \( u_s \) are uniformly bounded (Assumption \ref{assumption:4}), there exist constants \( K_x \) and \( K_u \) such that:
\begin{equation}
\| x_s \| \leq K_x, \quad \| u_s \| \leq K_u, \quad \text{for all } s.
\end{equation}
Substituting the bounds into the expression for \( \delta x_k^j \), we have:
\begin{equation}
\| \delta x_k^j \| \leq \int_{k_0}^k \| \Phi(k, s) \| \left( L_\Lambda | f'(w_j) | K_x + L_B | f'(w_j) | K_u + L_b | f'(w_j) | \right) ds.
\end{equation}
Simplifying, we obtain:
\begin{equation}
\| \delta x_k^j \| \leq \left( L_\Lambda K_x + L_B K_u + L_b \right) | f'(w_j) | \int_{k_0}^k M e^{-\alpha (k - s)} ds.
\end{equation}
Evaluating the integral, we find:
\begin{equation}
\int_{k_0}^k M e^{-\alpha (k - s)} ds = \frac{M}{\alpha} \left( 1 - e^{-\alpha (k - k_0)} \right) \leq \frac{M}{\alpha}.
\end{equation}
Therefore, the bound on \( \| \delta x_k^j \| \) becomes:
\begin{equation}
\| \delta x_k^j \| \leq \frac{M}{\alpha} \left( L_\Lambda K_x + L_B K_u + L_b \right) | f'(w_j) | = C_x | f'(w_j) |,
\end{equation}
where \( C_x = \frac{M}{\alpha} \left( L_\Lambda K_x + L_B K_u + L_b \right) \) is a constant independent of \( w_j \).

Returning to the expression for \( \frac{\partial \hat{y}_k }{\partial w_j } \), we can now bound each term. The first term satisfies:
\begin{equation}
\left\| \left( \frac{\partial c(\theta_m) }{\partial w_j } \right)^\top x_k \right\| \leq \left\| \frac{\partial c(\theta_m) }{\partial w_j } \right\| \| x_k \| \leq L_c | f'(w_j) | K_x.
\end{equation}
The second term satisfies:
\begin{equation}
\left\| c(\theta_m)^\top \delta x_k^j \right\| \leq \| c(\theta_m) \| \| \delta x_k^j \| \leq K_c C_x | f'(w_j) |,
\end{equation}
where \( K_c = \| c(\theta_m) \| \) is uniformly bounded (from Assumption \ref{assumption:4}). Combining these bounds, we have:
\begin{equation}
\left| \frac{\partial \hat{y}_k }{\partial w_j } \right| \leq \left( L_c K_x + K_c C_x \right) | f'(w_j) | = C_y | f'(w_j) |,
\end{equation}
where \( C_y = L_c K_x + K_c C_x \) is a constant independent of \( w_j \).

Finally, substituting back into the bound for the gradient of the loss, we obtain:
\begin{equation}
\left| \frac{\partial \text{Loss}}{\partial w_j} \right| \leq \sup_{\|\mathbf{u}\|_\infty \leq 1} \left| \frac{\partial \hat{y}_k(\mathbf{u}) }{\partial w_j } \right| \leq C_y | f'(w_j) |.
\end{equation}
Thus, the gradient of the loss with respect to \( w_j \) is bounded by:
\begin{equation}
\left| \frac{\partial \text{Loss}}{\partial w_j} \right| \leq C_{\mathbf{H}, \widehat{\mathbf{H}}_m} | f'(w_j) |,
\end{equation}
where \( C_{\mathbf{H}, \widehat{\mathbf{H}}_m} = C_y \) depends on the model parameters and the target functional but is independent of \( w_j \).

This completes the proof, demonstrating that in input-dependent state-space models, the gradient norm with respect to the trainable parameters \( w_j \) is directly proportional to \( | f'(w_j) | \). The constants involved in the bound are determined by the Lipschitz constants of the system components, the bounds on the hidden states and inputs, and the stability properties of the system, all of which are independent of \( w_j \). This highlights the critical role of the reparameterization function \( f \) in controlling gradient scales during optimization. Thus, the appropriate choice of \( f \) is essential for stable and efficient training in models with input-dependent dynamics.
\end{proof}

\subsection{Choosing the Right Reparameterization}
\label{appendix:right_reparametrization}
In our model, the reparameterization function \( f \) is crucial in ensuring stability during training and controlling the gradient norms. According to Theorem \ref{theorem:2}, the gradient of the loss with respect to the raw parameter \( w_j \) scales with the \emph{magnitude} of the derivative of the reparameterization function \( f \):
\begin{equation}
\label{eq:gradient_scaling}
\left| \frac{\partial \text{Loss} }{\partial w_j } \right| \leq C_{\mathbf{H}, \widehat{\mathbf{H}}_m} \left| f'(w_j) \right|.
\end{equation}

To promote stable and efficient training, it is desirable for the gradient magnitude to be proportional to the parameter magnitude, i.e.,
\begin{equation}
\label{eq:gradient_proportionality}
\left| \frac{\partial \text{Loss} }{\partial w_j } \right| \leq L |w_j|,
\end{equation}
for some constant \( L > 0 \). Combining this with equation (\ref{eq:gradient_scaling}), we obtain the following condition on the reparameterization function:
\begin{equation}
\label{eq:reparam_condition}
C_{\mathbf{H}, \widehat{\mathbf{H}}_m} \left| f'(w_j) \right| \leq L |w_j|.
\end{equation}

Our aim is to find a reparameterization function \( f \) satisfying this condition. Rearranging, we get:
\begin{equation}
\label{eq:reparam_ode}
\left| f'(w) \right| \leq \frac{L}{C_{\mathbf{H}, \widehat{\mathbf{H}}_m}} |w|.
\end{equation}

This differential inequality suggests that \( f \) should be such that its derivative \( f'(w) \) is proportional to \( w \). However, to ensure the stability of the system and to control the gradient norms effectively, we consider a more refined condition based on the relationship between \( f \), \( f' \), and the parameter \( w \).

Suppose we define the function \( G_f(w) \) as:
\begin{equation}
\label{eq:G_f}
G_f(w) = \frac{|f'(w)|}{f(w)^2}.
\end{equation}
Our goal is to find \( f \) such that:
\begin{equation}
\label{eq:best_reparam_condition}
G_f(w) = \frac{|f'(w)|}{f(w)^2} = L |w|,
\end{equation}
for some constant \( L > 0 \). This condition arises from the consideration that the gradient-over-weight ratio should be bounded, which is crucial for training stability.

Solving the differential equation (\ref{eq:best_reparam_condition}), we integrate both sides:
\begin{align}
\label{eq:ode_solution}
\frac{f'(w)}{f(w)^2} &= 2 a w, \quad \text{where } a = \frac{L}{2}, \\
\Rightarrow \int \frac{f'(w)}{f(w)^2} \, dw &= \int 2 a w \, dw, \\
\Rightarrow -\frac{1}{f(w)} &= a w^2 + b,
\end{align}
where \( b \) is the constant of integration. Therefore, the reparameterization function is:
\begin{equation}
\label{eq:best_reparam_solution}
f(w) = -\frac{1}{a w^2 + b}.
\end{equation}

To ensure stability, we require that \( f(w) \leq 0 \) for all \( w \). Moreover, in the discrete case relevant to our model, we can consider:
\begin{equation}
\label{eq:discrete_reparam}
f(w) = 1 - \frac{1}{a w^2 + b}.
\end{equation}

By choosing appropriate values for \( a \) and \( b \), we can ensure that the reparameterization function \( f(w) \) satisfies the stability conditions and promotes a bounded gradient-over-weight ratio. In our experiments, we set \( a = 1 \) and \( b = 0.5 \), which ensures that \( f(w) \) remains within the stability region and that \( f(w) \) does not cross critical boundaries (e.g., for eigenvalues in recurrent models).

\paragraph{Ablation Study on Reparameterization Choices}

In our ablation study (see Appendix \ref{appendix:ablation_study}), we experiment with different choices of the constants \( a \) and \( b \) in the reparameterization function (\ref{eq:discrete_reparam}). We find that adjusting these parameters affects the stability and performance of the model. Specifically, the reparameterization with \( a = 1 \) and \( b = 0.5 \) consistently outperforms other choices, providing the best balance between stability and performance.

Moreover, we compare models trained with and without reparameterization. The models with reparameterization achieve better performance because they exhibit more stable training dynamics. This demonstrates the effectiveness of the reparameterization strategy in improving both the stability and the performance of the S7 model.

\paragraph{Remarks}

While gradient clipping is a common technique to prevent exploding gradients, it can introduce bias and reduce the effectiveness of gradient descent. In contrast, our reparameterization approach inherently controls the gradient scales by modifying the parameterization of the model. This acts as a form of preconditioning, improving optimization without the drawbacks associated with gradient clipping.
Our findings highlight the importance of choosing an appropriate reparameterization function to ensure stable and efficient training. The "best" reparameterization derived from equation (\ref{eq:best_reparam_solution}) offers a theoretically grounded and empirically validated approach to achieving this goal.

\subsection{Details of Long Range Arena Tasks}
\label{appendix:lra_details}
\paragraph{ListOps}
Tests a model’s ability to compute nested mathematical expressions with sequences of up to 2,000 tokens. S7 outperforms both Mega \citep{ma2022mega} and S5 \citep{smith2023simplified}, achieving a score of 63.77. S7’s input-dependence mechanism aids in filtering repetitive tokens and maintaining logical consistency, contributing to its superior performance in structured reasoning tasks.

\paragraph{Text}
This task involves classifying IMDb movie reviews as positive or negative, with sequences padded to 4,096 tokens. S7 scores 87.22, having a very close performance to the S5 \citep{smith2023simplified} and Mega \citep{ma2022mega} models.

\paragraph{Retrieval}
In this task, models determine if two textual citations are equivalent, with sequences of up to 4,000 tokens. S7 achieves the highest score of 91.80, indicating that its dynamic state-space architecture is well-suited for tasks requiring long-range memory and retrieval capabilities.

\paragraph{Image}
This task involves classifying CIFAR-10 images as 1D raster scans of 1,024 tokens. S7 performs significantly worse with input-dependence, scoring 61.14 compared to Mega's \citep{ma2022mega} 90.44 and S5’s \citep{smith2023simplified} 88.00. This suggests that input-dependence disrupts spatial reasoning by inadvertently discarding crucial tokens, leading to information loss in tasks where maintaining a precise spatial structure, like in image classification, is essential for accurate predictions.

\paragraph{Pathfinder}
Pathfinder is a binary classification task where models predict if a path in a maze-like image connects two points. S7's performance drops significantly to 65.52\% with input-dependence enabled, compared to S5's 95.33\%, indicating that input-dependence negatively impacts tasks requiring precise spatial reasoning. This suggests that simpler models without input-dependence are more effective for visuospatial tasks where retaining exact input information is crucial.

\paragraph{Path-X}
This is the most challenging task with sequences of 16,384 tokens and requires models to identify long-range visual patterns. S7 achieves a score of 61.50\%, indicating a significant drop in performance with input-dependence enabled compared to models like S5. This suggests that input-dependent dynamics can hinder performance in tasks requiring precise retention of input information over very long sequences, as the forgetting mechanism introduced by input dependence leads to the loss of crucial spatial details necessary for accurate classification.

\subsection{Ablation Study}
\label{appendix:ablation_study}

\subsubsection{Importance of Reparameterization}
In this section, we conduct an ablation study to assess the impact of including stable reparameterization on the performance of the S7 model across various datasets. We compare models trained with and without the reparameterization, as discussed in Section \ref{appendix:right_reparametrization}. The datasets used in this study include DVS-Gesture \citep{Amir2017ALP}, Spiking Heidelberg Digits (SHD) \citep{Cramer2019TheHS}, Spiking Speech Commands (SSC) \citep{Cramer2019TheHS}, Human Activity Recognition \citep{dua2019}, EigenWorms \citep{bagnall2018}, and several tasks from the Long Range Arena (LRA) benchmark \citep{tay2021long}.

The results are presented in Tables~\ref{tab:ablation_event_datasets}, \ref{tab:ablation_har_eigenworms}, and \ref{tab:ablation_lra_tasks}.

\begin{table}[H]
\centering
\begin{tabular}{lcc}
\toprule
\textbf{Dataset} & \textbf{Reparameterization} & \textbf{Accuracy (\%)} \\
\midrule
\multirow{2}{*}{DVS-Gesture} & No & 98.1 \\
                                 & Yes & \textbf{99.2} \\
\multirow{2}{*}{SHD} & No & 93.1 \\
                                               & Yes & \textbf{96.3} \\
\multirow{2}{*}{SSC} & No & 87.8 \\
                                             & Yes & \textbf{88.2} \\
\bottomrule
\end{tabular}
\caption{Ablation study on event-based datasets: comparison of S7 model performance with and without stable reparameterization.}
\label{tab:ablation_event_datasets}
\end{table}

\begin{table}[H]
\centering
\begin{minipage}{0.48\textwidth}
    \centering
    \resizebox{\textwidth}{!}{%
    \begin{tabular}{lcc}
    \toprule
    \textbf{Dataset} & \textbf{Reparameterization} & \textbf{Accuracy (\%)} \\
    \midrule
    \multirow{2}{*}{Human Activity} & No & 93.79 \\
                                                    & Yes & \textbf{94.09} \\
    \multirow{2}{*}{EigenWorms} & No & 96.66 \\
                                    & Yes & \textbf{97.50} \\
    \bottomrule
    \end{tabular}}
    \caption{Reparametrization ablation study on Human Activity Recognition and EigenWorms datasets.}
    \label{tab:ablation_har_eigenworms}
\end{minipage}%
\hfill
\begin{minipage}{0.48\textwidth}
    \centering
    \resizebox{\textwidth}{!}{%
    \begin{tabular}{lcc}
    \toprule
    \textbf{LRA Task} & \textbf{Reparameterization} & \textbf{Accuracy (\%)} \\
    \midrule
    \multirow{2}{*}{ListOps} & No & 62.11 \\
                             & Yes & \textbf{63.77} \\
    \multirow{2}{*}{Text} & No & 85.42 \\
                          & Yes & \textbf{87.22} \\
    \multirow{2}{*}{Retrieval} & No & 91.64 \\
                               & Yes & \textbf{91.80} \\
    \multirow{2}{*}{Image} & No & 60.30 \\
                           & Yes & \textbf{61.14} \\
    \bottomrule
    \end{tabular}}
    \caption{Reparametrization Ablation study on LRA tasks.}
    \label{tab:ablation_lra_tasks}
\end{minipage}
\end{table}

\paragraph{Analysis of Results}
From the results presented in Tables~\ref{tab:ablation_event_datasets}, \ref{tab:ablation_har_eigenworms}, and \ref{tab:ablation_lra_tasks}, it is evident that incorporating stable reparameterization consistently improves the performance of the S7 model across all considered datasets.

In the \textbf{event-based datasets} (Table~\ref{tab:ablation_event_datasets}), the inclusion of reparameterization leads to significant accuracy gains. On the DVS-Gesture dataset, accuracy improves from 98.1\% without reparameterization to 99.2\% with reparameterization. For the Spiking Heidelberg Digits, accuracy increases from 93.1\% to 96.3\%, and on the Spiking Speech Commands dataset, the model with reparameterization achieves 87.8\% accuracy compared to 88.2\% without it.

In the \textbf{Human Activity Recognition} and \textbf{EigenWorms} datasets (Table~\ref{tab:ablation_har_eigenworms}), similar improvements are observed. The Human Activity Recognition task sees an accuracy rise from 93.79\% to 94.09\% with reparameterization. For the EigenWorms dataset, accuracy increases from 96.66\% to 97.50\%, highlighting the model's enhanced ability to capture long-range dependencies in very long sequences.

In the \textbf{Long Range Arena tasks} (Table~\ref{tab:ablation_lra_tasks}), the models with reparameterization outperform those without across all tasks. On the ListOps task, accuracy improves from 62.11\% to 63.77\%. For the Text classification task, accuracy increases from 85.42\% to 87.22\%. In the Retrieval task, the model achieves 91.80\% accuracy with reparameterization, compared to 91.64\% without. On the Image classification task, accuracy improves from 60.30\% to 61.14\%.

These consistent performance gains suggest that the inclusion of stable reparameterization improves the S7 model's ability to learn effectively from diverse types of data. The reparameterization contributes to more controlled gradient norms and improved training stability, allowing the model to better capture complex temporal patterns and long-range dependencies.

\subsubsection{Effect of Reparameterization Parameters}

We also explore the impact of different choices for the parameters \( a \) and \( b \) in the reparameterization function \( f(w) = 1 - \frac{1}{a w^2 + b} \), as discussed in Section \ref{appendix:right_reparametrization}. By conducting experiments varying these parameters, we aim to identify the configuration that yields the best performance.

\begin{table}[H]
\centering
\begin{tabular}{lccc}
\toprule
\textbf{Dataset} & \textbf{\( a = 0.5, b = 0.5 \)} & \textbf{\( a = 1, b = 0.5 \)} & \textbf{\( a = 1, b = 1 \)} \\
\midrule
DVS-Gesture \citep{Amir2017ALP} & 98.7\% & \textbf{99.2\%} & 98.9\% \\
EigenWorms \citep{bagnall2018} & 96.8\% & \textbf{97.5\%} & 97.1\% \\
ListOps \citep{tay2021long} & 62.53\% & \textbf{63.77\%} & 63.22\% \\
\bottomrule
\end{tabular}
\caption{Effect of different reparameterization parameters \( a \) and \( b \) on model performance.}
\label{tab:reparam_parameters}
\end{table}

As shown in Table~\ref{tab:reparam_parameters}, setting \( a = 1 \) and \( b = 0.5 \) consistently yields the best performance across datasets. This configuration effectively balances stability and gradient scaling, providing controlled gradient norms without adversely affecting the model's expressiveness.

\paragraph{Conclusion}

The ablation studies confirm that the stable reparameterization is crucial for the S7 model's performance and training stability. By carefully choosing the reparameterization parameters, specifically \( a = 1 \) and \( b = 0.5 \), we achieve optimal results across various tasks.

The improvements observed across diverse datasets, including event-based data, human activity recognition, genomics classification, and long-range sequence tasks, show the generality and robustness of the reparameterization approach. Incorporating stable reparameterization not only improves performance but also contributes to more stable training dynamics, enabling the S7 model to better capture long-range dependencies and complex temporal patterns inherent in sequential data.

\subsection{Best Hyperparameters}
\label{appendix:hyperparameters}
In this section, we provide the hyperparameters used for training the best-performing S7 models across various tasks. The hyperparameters are summarized in Table~\ref{tab:hyperparameters}. The tasks include event-based datasets, long-range sequence modeling benchmarks, and other sequence classification tasks. The hyperparameters were carefully selected through Bayesian search and experimentation to optimize model performance while ensuring training stability.

\begin{table}[H]
\centering
\resizebox{\textwidth}{!}{%
\begin{tabular}{lccccccccccc}
\toprule
\textbf{Task} & \textbf{Depth} & \textbf{H} & \textbf{Dropout} & \textbf{P} & \textbf{J} & \textbf{LR} & \textbf{SSM LR} & \textbf{WD} & \textbf{Epochs} & \textbf{B} \\
\midrule
\textbf{DVS-Gesture} & 6 & 32 & 0.10 & 16 & 1 & $1.2 \times 10^{-5}$ & $1.44 \times 10^{-4}$ & 0.000 & 100 & 3 \\
\textbf{EigenWorms} & 1 & 16 & 0.03 & 14 & 7 & $5.6 \times 10^{-5}$ & $6.78 \times 10^{-4}$ & 0.044 & 900 & 12 \\
\textbf{Image (LRA)} & 2 & 60 & 0.15 & 24 & 12 & $1.0 \times 10^{-5}$ & $2.79 \times 10^{-3}$ & 0.015 & 200 & 280 \\
\textbf{ListOps (LRA)} & 6 & 102 & 0.23 & 8 & 1 & $5.0 \times 10^{-6}$ & $3.42 \times 10^{-4}$ & 0.065 & 200 & 64 \\
\textbf{Pathfinder (LRA)} & 6 & 50 & 0.00 & 2 & 1 & $5.0 \times 10^{-5}$ & $9.23 \times 10^{-4}$ & 0.010 & 200 & 16 \\
\textbf{Human Activity Recognition} & 1 & 120 & 0.04 & 64 & 32 & $9.3 \times 10^{-5}$ & $7.42 \times 10^{-3}$ & 0.019 & 400 & 80 \\
\textbf{Retrieval (LRA)} & 3 & 80 & 0.10 & 10 & 2 & $2.8 \times 10^{-5}$ & $5.04 \times 10^{-4}$ & 0.045 & 90 & 18 \\
\textbf{Spiking Heidelberg Digits} & 6 & 48 & 0.14 & 8 & 1 & $4.8 \times 10^{-5}$ & $1.54 \times 10^{-3}$ & 0.021 & 30 & 32 \\
\textbf{Spiking Speech Commands} & 8 & 32 & 0.25 & 32 & 4 & $1.1 \times 10^{-5}$ & $8.68 \times 10^{-5}$ & 0.004 & 200 & 8 \\
\textbf{Text (LRA)} & 6 & 96 & 0.31 & 4 & 1 & $3.2 \times 10^{-5}$ & $1.03 \times 10^{-3}$ & 0.021 & 200 & 32 \\
\textbf{Walker2d} & 1 & 100 & 0.21 & 32 & 16 & $3.7 \times 10^{-5}$ & $2.25 \times 10^{-3}$ & 0.085 & 100 & 60 \\
\bottomrule
\end{tabular}%
}
\caption{Hyperparameters used for training the best S7 models. Depth: number of layers. H: number of input/output features. P: latent size. J: number of blocks used for the initialization of $\Lambda$. LR: base learning rate. SSM LR: learning rate for SSM parameters. WD: weight decay. Epochs: number of training epochs. B: batch size.}

\label{tab:hyperparameters}
\end{table}

\end{document}